\documentclass{article} 
\usepackage{iclr2017_conference,times}
\usepackage{hyperref}
\usepackage{url}
\usepackage{graphics}
\usepackage{amsthm}
\usepackage{mathtools}
\usepackage{amsmath,amsfonts,amssymb}       
\usepackage{appendix}
\usepackage{bbm}

\newcommand\EE{\mathbb{E}}

\newcommand\NN{\mathbb{N}}
\newcommand\manP{\mathcal{P}}
\newcommand\manM{\mathcal{M}}
\newcommand\Int{\text{Int }}
\newcommand\PP{\mathbb{P}}

\newcommand\RR{\mathbb{R}}

\newcommand\Ind{\mathbbm{1}}
\renewcommand{\d}[1]{\ensuremath{\operatorname{d}\!{#1}}}

\newcommand{\matr}[1]{\mathbf{#1}}

\newtheorem{lemma}{Lemma}
\newtheorem{coro}{Corollary}[section]
\newtheorem{theo}{Theorem}[section]
\theoremstyle{definition}
\newtheorem{definition}{Definition}[section]

\title{\centering \scalebox{1}{Towards Principled Methods for Training}\\
       \scalebox{1}{Generative Adversarial Networks}}

\author{
\hspace{21mm} Martin Arjovsky  \hspace{40mm} L\'eon Bottou \\
\hspace{4mm} Courant Institute of Mathematical Sciences \hspace{16mm} Facebook AI Research\\
\hspace{9.75mm} \texttt{martinarjovsky@gmail.com} \hspace{23.5mm} \texttt{leonb@fb.com}
}

\graphicspath{{./figures/}}

%


\begin{document}

\maketitle

\begin{abstract}
The goal of this paper is not to introduce a single algorithm 
or method, but to make theoretical steps towards fully 
understanding the training dynamics of generative adversarial networks.
In order to substantiate our theoretical analysis, we perform targeted
experiments to verify our assumptions, illustrate our claims,
and quantify the phenomena. This paper is divided into three sections.
The first section introduces the problem at hand. The second section
is dedicated to studying and proving rigorously the problems including
instability and saturation that arize when training generative adversarial 
networks. The third section examines a practical and theoretically grounded 
direction towards solving these problems, while introducing new tools to study them. 
\end{abstract}

\section{Introduction}
Generative adversarial networks (GANs)\citep{Good-ea-GAN} have achieved great
success at generating realistic and sharp looking images.
However, they are widely general methods, now starting to be applied to several
other important problems, such as semisupervised learning, stabilizing sequence
learning methods for speech and language, and 3D modelling. \citep{Denton-ea-LAPGAN,
Rad-ea-DCGAN, Sal-ea-IGAN, Lamb-ea-PF, Wu-ea-3D}

However, they still remain remarkably difficult
to train, with most current papers dedicated to heuristically finding
stable architectures. \citep{Rad-ea-DCGAN, Sal-ea-IGAN}

Despite their success, there is little to no theory explaining the
unstable behaviour of GAN training. Furthermore, approaches to attacking
this problem still rely on heuristics that are extremely sensitive
to modifications. This makes it extremely hard to experiment with
new variants, or to use them in new domains, which limits their
applicability drastically. This paper aims to change that,
by providing a solid understanding of these issues, and creating 
principled research directions towards adressing them.

It is interesting to note that the architecture of the generator
used by GANs doesn't differ significantly from other approaches
like variational autoencoders \citep{Kingma-ea-VAE}. After all, at the core
of it we first sample from a simple prior $z \sim
p(z)$, and then output our final sample $g_\theta(z)$, sometimes
adding noise in the end. Always, $g_\theta$ is a neural network
parameterized by $\theta$, and the main difference is how $g_\theta$ is trained.

Traditional approaches to generative modeling relied on maximizing likelihood,
or equivalently minimizing the Kullback-Leibler (KL) divergence
between our unknown data distribution $\PP_r$ and our generator's distribution
$\PP_g$ (that depends of course on $\theta$). If we assume that both distributions
are continuous with densities $P_r$ and $P_g$, then these methods try to minimize

$$ KL(\PP_r \| \PP_g) = \int_{\mathcal{X}} P_r(x) \log\frac{P_r(x)}{P_g(x)} \d x $$

This cost function has the good property that it has a unique minimum at $\PP_g = \PP_r$,
and it doesn't require knowledge of the unknown $P_r(x)$ to optimize it (only samples). 
However, it is
interesting to see how this divergence is not symetrical between $\PP_r$ and $\PP_g$:
\begin{itemize}
  \item If $P_r(x) > P_g(x)$, then $x$ is a point with higher probability
of coming from the data than being a generated sample. This is the core of the 
phenomenon
commonly described as `mode dropping': when there are large regions with
high values of $P_r$, but small or zero values in $P_g$. It is important to note
that when $P_r(x) > 0$ but $P_g(x) \to 0$, the integrand inside the KL grows quickly
to infinity, meaning that this cost function assigns an extremely high cost to 
a generator's distribution not covering parts of the data.
  \item If $P_r(x) < P_g(x)$, then $x$ has low probability of being a data point,
but high probability of being generated by our model. This is the case when we see our generator
outputting an image that doesn't look real.
In this case, when $P_r(x) \to 0$ and $P_g(x) > 0$, we see that the value
inside the KL goes to 0, meaning that this cost function will pay extremely
low cost for generating fake looking samples.
\end{itemize}

Clearly, if we would minimize $KL(\PP_g \| \PP_r)$ instead, the weighting of these errors
would be reversed, meaning that this cost function would pay a high cost for
generating not plausibly looking pictures. Generative adversarial networks have been
shown to optimize (in its original formulation), the Jensen-shannon divergence,
a symmetric middle ground to this two cost functions
$$ JSD(\PP_r \| \PP_g) = \frac{1}{2} KL(\PP_r \| \PP_A) + \frac{1}{2} KL(\PP_g \| \PP_A) $$
where $\PP_A$ is the `average' distribution, with density $\frac{P_r + P_g}{2}$.
An impressive experimental analysis of the similarities, uses and differences of these
divergences in practice can be seen at \cite{Theis-ea-Gen}.
It is indeed conjectured that the reason of GANs success at producing reallistically
looking images is due to the switch from the traditional maximum likelihood approaches. 
\citep{Theis-ea-Gen, Ferenc15}. However, the problem is far from closed.

Generative adversarial networks are formulated
in two steps. We first train a discriminator $D$ to maximize
\begin{equation} \label{eq::cost}
L(D, g_\theta) = \EE_{x \sim \PP_r}[\log D(x)] + \EE_{x \sim \PP_g}[\log(1 - D(x))]
\end{equation}
One can show easily that the optimal discriminator has the shape
\begin{equation}
D^*(x) = \frac{P_r(x)}{P_r(x) + P_g(x)} \label{eq::optdisc}
\end{equation}
and that $L(D^*, g_\theta) = 2 JSD(\PP_r \| \PP_g) - 2 \log 2$, so minimizing equation \eqref{eq::cost}
as a function of $\theta$ yields minimizing the Jensen-Shannon divergence when the
discriminator is optimal. In theory,
one would expect therefore that we would first train the discriminator as close
as we can to optimality (so the cost function on $\theta$ 
better approximates the $JSD$), and
then do gradient steps on $\theta$, alternating these two things. However, 
this doesn't work. In practice, as the discriminator gets better, the
updates to the generator get consistently worse. The original GAN paper argued that
this issue arose from saturation, and switched
to another similar cost function that doesn't have this problem. However, 
even with this new cost function, updates tend
to get worse and optimization gets massively unstable. Therefore, several
questions arize:
\begin{itemize}
  \item Why do updates get worse as the discriminator gets better? Both
in the original and the new cost function.
  \item Why is GAN training massively unstable?
  \item Is the new cost function following a similar divergence to the $JSD$? If so,
what are its properties?
  \item Is there a way to avoid some of these issues?
\end{itemize}

The fundamental contributions of this paper are the answer to all these questions,
and perhaps more importantly, to introduce the tools to analyze them properly.
We provide a new direction designed to avoid
the instability issues in GANs, and examine in depth the theory behind it.
Finally, we state a series of open questions and problems, that determine several new directions
of research that begin with our methods.

\section{Sources of instability}
The theory tells us that the
trained discriminator will have cost 
at most $2 \log 2 - 2JSD(\PP_r \| \PP_g)$. However, in practice, if we just
train $D$ till convergence, its error will go to 0, as observed
in \autoref{fig::disc_error}, pointing to the fact that the $JSD$
between them is maxed out. The only way this can happen is if
the distributions are not continuous\footnote{By continuous
we will actually refer to an absolutely continuous random variable (i.e. one that
has a density),
as it typically done. For further clarification see Appendix \autoref{app::clarifications}.}, or they have disjoint supports.

One possible cause for the distributions not to be continuous is if
their supports lie on low dimensional manifolds. There is strong empirical
and theoretical evidence to believe that $\PP_r$ is indeed extremely concentrated
on a low dimensional manifold \citep{Har-Manh}. As of $\PP_g$, we will
prove soon that such is the case as well.

In the case of GANs, $\PP_g$ is defined via sampling from
a simple prior $z \sim p(z)$, and then applying a function
$g: \mathcal{Z} \rightarrow \mathcal{X}$, so the support of $\PP_g$ has
to be contained in $g(\mathcal{Z})$. If the dimensionality of $\mathcal{Z}$
is less than the dimension of $\mathcal{X}$ (as is typically the case), then
it's imposible for $\PP_g$ to be continuous. This is because in most cases $g(\mathcal{Z})$ will
be contained in a union of low dimensional manifolds, and therefore have measure
0 in $\mathcal{X}$. Note that while intuitive, this is highly nontrivial,
since having an $n$-dimensional parameterization does absolutely not
imply that the image will lie on an $n$-dimensional manifold. In fact,
there are many easy counterexamples, such as Peano curves, lemniscates,
and many more. In order to show this for our case, we rely heavily
on $g$ being a neural network, since we are able to leverage that
$g$ is made by composing very well behaved functions. We now
state this properly in the following Lemma:

\begin{lemma} \label{lem::glowdim}
Let $g: \mathcal{Z} \rightarrow \mathcal{X}$ be a function composed by
affine transformations and pointwise nonlinearities, which can either
be rectifiers, leaky rectifiers, or smooth strictly increasing
functions (such as the sigmoid, tanh, softplus, etc). Then, $g(\mathcal{Z})$
is contained in a countable union of manifolds of dimension at most
$\dim \mathcal{Z}$. Therefore, if the dimension of $\mathcal{Z}$
is less than the one of $\mathcal{X}$, $g(\mathcal{Z})$ will be a set of measure 0 in
$\mathcal{X}$.
\end{lemma}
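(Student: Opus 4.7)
The plan is to proceed by induction on the number of elementary operations (affine maps and componentwise nonlinearities) in the composition defining $g$. The base case is $\mathcal{Z}$ itself, which is a single manifold of dimension $\dim\mathcal{Z}$. For the inductive step I assume that after some prefix of operations the image of $\mathcal{Z}$ is contained in a countable union $\bigcup_i M_i$ of smooth manifolds with $\dim M_i \le \dim\mathcal{Z}$, and verify the property is preserved by the next layer. Because countable unions of countable unions are countable unions, it is enough to handle a single manifold $M$ of dimension $d \le \dim\mathcal{Z}$ at a time.

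The workhorse is the following auxiliary fact: if $M$ is a smooth $d$-manifold and $f$ is smooth on an open neighbourhood of $M$ in the ambient Euclidean space, then $f(M)$ is contained in a countable union of smooth manifolds of dimension at most $d$. One proves it by stratifying $M$ according to the rank $r \le d$ of $d(f|_M)$ and applying the constant-rank theorem on each stratum to obtain local embeddings whose images are $r$-dimensional submanifolds; second countability of $M$ then reduces the open cover to countably many pieces. With this tool in hand the affine case is immediate, since an affine map is smooth on all of $\mathbb{R}^n$.

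For the nonlinearities I would split into cases. For smooth strictly increasing activations (sigmoid, tanh, softplus), the componentwise map is a diffeomorphism of $\mathbb{R}^n$ onto an open box, and a diffeomorphism preserves manifold structure and dimension exactly, so the image of $M$ is still a single $d$-manifold. For leaky rectifiers with slope $\alpha>0$, the componentwise map is a piecewise linear global homeomorphism that is linear on each of the $2^n$ closed orthants $\bar O_j$, so I write $M = \bigcup_j (M \cap \bar O_j)$ and apply the auxiliary fact to each affine piece. For ReLU the argument is essentially identical: on each closed orthant $\bar O_j$ the map coincides with the linear projection that zeroes out the coordinates negative on $\bar O_j$, so $\mathrm{ReLU}(M) = \bigcup_j \mathrm{ReLU}(M \cap \bar O_j)$ is again a finite union of affine images of subsets of $M$, hence a countable union of manifolds of dimension at most $d$.

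The main obstacle is the auxiliary smooth-image fact together with the need, for ReLU, to partition the manifold along the coordinate hyperplanes before applying it; the rest is essentially bookkeeping through the layers. The final measure-zero conclusion is then immediate: when $\dim\mathcal{Z} < \dim\mathcal{X}$, each $M_i \subset \mathcal{X}$ is an embedded manifold of strictly smaller dimension and therefore has Lebesgue measure zero in $\mathcal{X}$, and countable unions of null sets are null.
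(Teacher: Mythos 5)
Your high-level plan (layerwise induction, diffeomorphisms for the smooth strictly increasing activations, orthant decomposition turning ReLU and leaky ReLU into finitely many linear pieces, and a final reduction of everything to affine maps acting on a manifold) parallels the structure of the paper's argument, and the bookkeeping steps and the final measure-zero conclusion are fine. The problem is the auxiliary fact that carries all the weight, namely that $f(M)$ is contained in a countable union of manifolds of dimension at most $d$ for any $f$ smooth near a $d$-manifold $M$: the proof you sketch for it does not work. The rank of $d_x(f|_M)$ is only lower semicontinuous in $x$, so the exact-rank stratum $\{x \in M : \operatorname{rank} d_x(f|_M) = r\}$ is in general neither open in $M$ nor a submanifold; it is merely relatively closed inside the open set where the rank is at least $r$, and it can be as bad as a Cantor set. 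The constant-rank theorem requires the rank to be constant on an open neighbourhood, so it applies only at points where the rank is locally constant; those points form an open dense subset of $M$, but the complementary ``bad'' set is exactly where the difficulty lives, and your stratification says nothing about its image. Sard/Federer-type results control the measure (or Hausdorff dimension) of that image, but the lemma needs containment in countably many manifolds, a structural property that does not follow from measure bounds. Restricting to the case you actually need, $f$ affine, does not rescue the argument, because the rank of $W$ restricted to the varying tangent spaces $T_xM$ is again not locally constant.

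This is not a cosmetic complaint: the image of a manifold under a rank-degenerate affine map is precisely the crux of the paper's own proof, which after an SVD reduction handles coordinate projections by a separate argument about their critical points, rather than by quoting a general smooth-image fact. So to complete your route you would need an honest proof of the auxiliary statement at least for affine maps, dealing explicitly with the locus where the rank drops and is not locally constant. Note also that for networks built only from (leaky) rectifiers the paper sidesteps the issue entirely with the global sign-pattern decomposition $g(\mathcal{Z}) \subseteq \bigcup \mathbf{D}_n \mathbf{W}_n \cdots \mathbf{D}_1 \mathbf{W}_1 \mathcal{Z}$, a finite union of images of $\mathcal{Z}$ under linear maps, i.e.\ of single linear manifolds; routing the rectifier case through the problematic auxiliary fact, as you do, makes that case harder than it needs to be.
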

\vspace{-2mm}
\begin{proof}
See Appendix \autoref{app::proofs}.
\end{proof}
\vspace{-1mm}
Driven by this, this section 
shows that if the supports of $\PP_r$ and $\PP_g$ are disjoint or lie in
low dimensional manifolds, there is always
a perfect discriminator between them, and we explain exactly
how and why this leads to an unreliable training of the generator.
\begin{figure}[t!]
  \centering
  \begin{minipage}[b]{0.5\linewidth}
    \centering
    \includegraphics[scale=0.372]{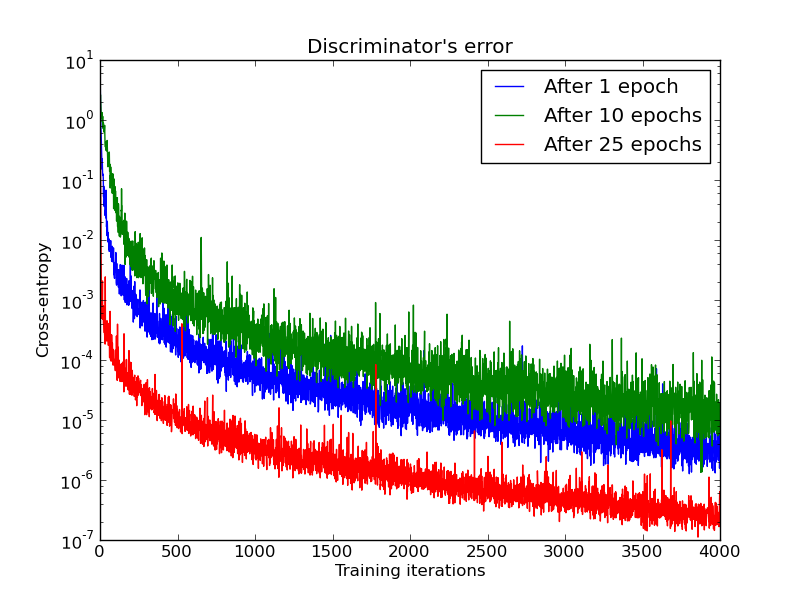}
  \end{minipage}
  \begin{minipage}[b]{0.5\linewidth}
    \centering
    \includegraphics[scale=0.372]{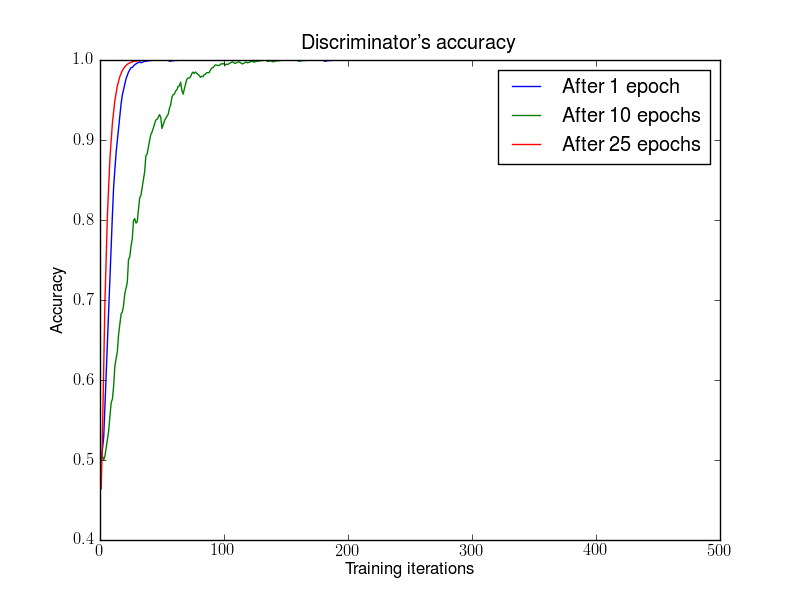}
  \end{minipage} 
  \caption{First, we trained a DCGAN for 1, 10 and 25 epochs.
Then, with the generator fixed we train a discriminator from scratch.
We see the error quickly going to 0, even with very few iterations
on the discriminator. This even happens after 25 epochs of the DCGAN,
when the samples are remarkably good and the supports are likely
to intersect, pointing to the non-continuity of the distributions.
Note the logarithmic scale. For illustration purposes we also
show the accuracy of the discriminator, which goes to 1
in sometimes less than 50 iterations. This is 1 even for numerical
precision, and the numbers are running averages, pointing
towards even faster convergence.}
  \label{fig::disc_error}
\end{figure}

\subsection{The perfect discrimination theorems}

For simplicity, and to introduce the methods, we will first
explain the case where $\PP_r$ and $\PP_g$ have disjoint
supports. We say that a discriminator $D: \mathcal{X} \rightarrow [0, 1]$
has accuracy 1 if it takes the value 1 on a set that contains
the support of $\PP_r$ and value 0 on a set that contains the support of $\PP_g$. Namely,
$\PP_r[D(x) = 1] = 1$ and $\PP_g[D(x) = 0] = 1$.

\begin{theo} \label{theo::perfDdisj}
If two distributions $\PP_r$ and $\PP_g$ have support contained on two disjoint compact subsets
$\manM$ and $\manP$ respectively, then there is a smooth optimal discrimator
$D^*: \mathcal{X} \rightarrow [0, 1]$
that has accuracy 1 and $\nabla_x D^*(x) = 0$ for all $x \in \manM \cup
\manP$.
\end{theo}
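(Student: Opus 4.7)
The plan is to construct $D^*$ explicitly as a smooth cutoff function separating the two compact supports, using the fact that in a metric space (which $\mathcal{X}=\RR^n$ certainly is), two disjoint compact sets are separated by a positive distance.

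First, I would invoke the standard fact that since $\manM$ and $\manP$ are disjoint and compact, $\delta := d(\manM, \manP) > 0$. I then form the two $\delta/3$-neighborhoods
\[
\hat{\manM} = \{x : d(x, \manM) < \delta/3\}, \qquad \hat{\manP} = \{x : d(x, \manP) < \delta/3\},
\]
which are disjoint open sets containing $\manM$ and $\manP$ respectively, and whose closures are also disjoint.

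Next, using a smooth Urysohn-type construction (for instance, starting from a continuous piecewise-linear function of $d(x,\manM)$ that is $1$ on $\hat{\manM}$ and $0$ outside a slightly larger neighborhood of $\manM$ disjoint from $\hat{\manP}$, then convolving with a standard mollifier of small enough bandwidth to preserve these two constancy regions), I obtain a smooth $D^*:\mathcal{X}\to[0,1]$ that is identically $1$ on an open neighborhood of $\manM$ and identically $0$ on an open neighborhood of $\manP$. Because $D^*$ is locally constant on open sets containing $\manM\cup\manP$, we immediately get $\nabla_x D^*(x)=0$ for every $x\in\manM\cup\manP$.

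To conclude optimality, I would simply observe that each term of $L(D,g_\theta) = \EE_{x\sim\PP_r}[\log D(x)] + \EE_{x\sim\PP_g}[\log(1-D(x))]$ is non-positive for any $D$ taking values in $[0,1]$, and our $D^*$ achieves $L(D^*,g_\theta) = \log 1 + \log 1 = 0$ because $\PP_r$ is supported in $\manM$ (where $D^*=1$) and $\PP_g$ is supported in $\manP$ (where $D^*=0$); hence $D^*$ is a maximizer. The accuracy-$1$ conditions $\PP_r[D^*(x)=1]=1$ and $\PP_g[D^*(x)=0]=1$ follow the same way.

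The only genuinely delicate step is the smooth construction: one must ensure the mollification does not spoil the two constancy regions, which is why it was crucial to introduce a buffer (the $\delta/3$ neighborhoods, with mollifier bandwidth less than $\delta/3$). Everything else — positivity of the distance, optimality of $D^*$, and the gradient vanishing — is immediate from elementary facts about compact sets and the chosen cost function.
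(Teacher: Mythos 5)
Your proposal is correct and follows essentially the same route as the paper: positive distance $\delta$ between the disjoint compact supports, $\delta/3$-neighborhoods as buffers, a smooth Urysohn-type function that is $1$ near $\manM$ and $0$ near $\manP$, optimality because both log terms attain their maximum value $0$, and vanishing gradient by local constancy. The only difference is that you construct the smooth cutoff explicitly by mollification (and spell out the optimality bound $L \leq 0$), where the paper simply invokes the smooth Urysohn lemma; this is a matter of detail, not of approach.
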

\vspace{-2mm}
\begin{proof} The discriminator is trained to maximize
$$ \EE_{x \sim \PP_r}[\log D(x)] + \EE_{x \sim \PP_g}[\log(1 - D(x))] $$
Since $\manM$ and $\manP$ are compact and disjoint, $0 < \delta = d(\manP, \manM)$ the
distance between both sets. We now define
$$ \hat \manM = \{x: d(x, M) \leq \delta/3\} $$
$$ \hat \manP = \{x: d(x, P) \leq \delta/3\} $$
By definition of $\delta$ we have that $\hat P$ and $\hat M$ are clearly disjoint
compact sets. Therefore, by
Urysohn's smooth lemma there exists a smooth function $D^*:\mathcal{X} \rightarrow [0, 1]$
such that $D^*|_{\hat \manM} \equiv 1$ and $D^*|_{\hat \manP} \equiv 0$.
Since $\log D^*(x) = 0$ for all $x$ in the support of $\PP_r$ and $\log(1 - D^*(x))
= 0$ for all $x$ in the support of $\PP_g$, the discriminator is completely
optimal and has accuracy 1.
Furthermore, let $x$ be in $\manM \cup \manP$. If we assume that 
$x \in \manM$, there is an open ball $B = B(x, \delta/3)$
on which $D^*|_B $ is constant. This shows that $\nabla_x D^*(x) \equiv 0$.
Taking $x \in \manP$ and working analogously we finish the proof.
\end{proof}
In the next theorem, we take away the disjoint assumption, to make it general to the case of 
two different manifolds. However, if the two manifolds
match perfectly on a big part of the space, no discriminator could separate them.
Intuitively, the chances of two low dimensional manifolds having
this property is rather dim: for two curves to match in space in a specific segment,
they couldn't be perturbed in any arbitrarilly small way and still satisfy this property.
To do this, we will define the notion of two manifolds perfectly aligning, and show that this
property never holds with probability 1 under any arbitrarilly small perturbations.
\vspace{1mm}

\begin{definition} We first need to recall the definition of transversality. Let $\manM$ and $\manP$
be two boundary free regular submanifolds of $\mathcal{F}$, which in our cases will simply be $\mathcal{F}
= \RR^d$. Let $x \in \manM \cap \manP$ be an intersection point of the two manifolds.
We say that $\manM$ and $\manP$ intersect transversally in $x$
if $T_x \manM + T_x \manP = T_x \mathcal{F}$, where $T_x \manM$ means the tangent space of $\manM$
around $x$.
\end{definition}
\vspace{1mm}

\begin{definition}
We say that two manifolds without boundary $\mathcal{M}$ and $\mathcal{P}$ \textbf{perfectly align} if there
is an $x \in \mathcal{M} \cap \mathcal{P}$ such that $\mathcal{M}$ and $\mathcal{P}$ don't
intersect transversally in $x$.
\vspace{-2mm}

We shall note the boundary and interior of a manifold $\manM$ by $\partial M$ and $\Int M$ respectively.
We say that two manifolds $\manM$ and $\manP$
(with or without boundary) perfectly align if any of the boundary free manifold pairs 
$(\Int \manM, \Int \manP), (\Int \manM, \partial \manP), (\partial \manM, \Int \manP)$ or
$(\partial \manM, \partial \manP)$ perfectly align.
\end{definition}
The interesting thing is that we can safely assume in practice that any two manifolds never perfectly
align. This can be done
since an arbitrarilly small random perturbation on two
manifolds will lead them to intersect transversally or don't intersect at all. This
is precisely stated and proven in Lemma \autoref{lem::perfalign}.

As stated by Lemma \autoref{lem::interm}, if two manifolds don't perfectly align,
their intersection $\mathcal{L} = \manM \cap \manP$ will be a finite union of manifolds with dimensions
strictly lower than both the dimension of $\manM$ and the one of $\manP$.
\begin{lemma} \label{lem::perfalign}
Let $\manM$ and $\manP$ be two regular submanifolds of $\RR^d$ that don't have full dimension. Let
$\eta, \eta'$ be arbitrary independent continuous random variables. We therefore define the perturbed
manifolds as $\tilde \manM = \manM + \eta$ and $\tilde \manP = \manP + \eta'$. Then
$$\PP_{\eta, \eta'}(\tilde \manM
\text{ does not perfectly align with } \tilde \manP) = 1$$
\end{lemma}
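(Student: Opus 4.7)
The plan is to translate the failure of transversality between $\tilde{\manM}$ and $\tilde{\manP}$ into the statement that $\eta'-\eta$ is a critical value of a certain smooth difference map, and then combine Sard's theorem with the absolute continuity of $\eta'-\eta$.

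I would first treat the boundary-free case, so suppose $\manM,\manP\subset\RR^d$ are regular submanifolds without boundary, with $\dim\manM,\dim\manP<d$. Define the smooth map
$$\phi:\manM\times\manP\to\RR^d,\qquad \phi(m,p)=m-p.$$
A pair $(m,p)$ lies in $\phi^{-1}(\eta'-\eta)$ iff $m+\eta=p+\eta'$, i.e.\ iff the common value $x$ is a point of $\tilde\manM\cap\tilde\manP$; this gives a bijection between $\phi^{-1}(\eta'-\eta)$ and the intersection. Since translation does not alter tangent spaces viewed inside $\RR^d$, we have $T_x\tilde\manM=T_m\manM$ and $T_x\tilde\manP=T_p\manP$. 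The differential $d\phi_{(m,p)}(v,w)=v-w$ has image $T_m\manM+T_p\manP$, so $\phi$ is a submersion at $(m,p)$ exactly when $\tilde\manM$ and $\tilde\manP$ meet transversally at the corresponding $x$. Hence the event ``$\tilde\manM$ and $\tilde\manP$ perfectly align'' coincides with the event $\{\eta'-\eta\in C\}$, where $C\subset\RR^d$ is the set of critical values of $\phi$.

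Sard's theorem, applied to the smooth map $\phi$ from the $(\dim\manM+\dim\manP)$-dimensional manifold to $\RR^d$, guarantees that $C$ has Lebesgue measure zero. Because $\eta,\eta'$ are independent and at least one of them is absolutely continuous, the difference $\eta'-\eta$ has a density on $\RR^d$ and therefore
$$\PP_{\eta,\eta'}(\eta'-\eta\in C)=0,$$
which gives the claim in the boundary-free setting.

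To pass to the general statement I would just unpack the paper's definition: $\manM$ and $\manP$ perfectly align iff at least one of the four pairs $(\Int\manM,\Int\manP)$, $(\Int\manM,\partial\manP)$, $(\partial\manM,\Int\manP)$, $(\partial\manM,\partial\manP)$ perfectly aligns in the boundary-free sense. Each of these pairs still consists of boundary-free regular submanifolds of $\RR^d$ whose dimensions are bounded by those of $\manM$ and $\manP$ (and hence strictly less than $d$), so the argument above applies to each pair, each giving a null event; the union of four null events is null, completing the proof. The main subtlety I anticipate is dimensional bookkeeping around Sard's theorem when $\dim\manM+\dim\manP<d$: in that regime every point of $\phi$ is critical, so $C$ equals the entire image of $\phi$, and one must note that Sard's theorem still supplies measure zero of $\operatorname{im}\phi$ (smooth images of lower-dimensional manifolds are Lebesgue null) so the argument is unaffected; the only genuine hypothesis being used is that each of the submanifolds involved is not full-dimensional.
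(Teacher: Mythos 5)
Your argument is correct, and it takes a cleaner, more unified route than the paper. You observe that $\tilde\manM$ and $\tilde\manP$ fail to intersect transversally at some point exactly when $\eta'-\eta$ is a critical value of the difference map $\phi(m,p)=m-p$ on $\manM\times\manP$ (the image of $d\phi_{(m,p)}$ being $T_m\manM+T_p\manP$), so a single application of Sard's theorem, combined with the absolute continuity of $\eta'-\eta$, handles everything at once; in particular the regime $\dim\manM+\dim\manP<d$ is absorbed automatically, since there every point is critical and Sard still makes the whole image $\manM-\manP$ Lebesgue-null. The paper instead splits into two cases: when $\dim\manM+\dim\manP\geq d$ it simply cites the General Position Lemma for the statement that generic translations intersect only transversally, and only when $\dim\manM+\dim\manP<d$ does it introduce the map $f(x,y)=x-y$, applying Sard to the critical values and a countable cover of the regular points by neighbourhoods on which $f$ is an embedding to conclude that $\manM-\manP$ has measure zero. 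What your version buys is self-containedness and economy: you effectively reprove the translational case of the parametric transversality theorem rather than invoking it, and you avoid the separate embedding-cover argument (at the cost of relying on the full strength of Sard for the "all points critical" case, and of the mild measurability point that the critical-value set is a countable union of compact images, hence Borel). The paper's version buys brevity in the high-dimensional case by citation and a more hands-on measure-zero argument in the low-dimensional case. Your treatment of boundaries, via the translation-commutes-with-interior/boundary observation and a union bound over the four boundary-free pairs, is the same as the paper's.
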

\begin{proof}
See Appendix \autoref{app::proofs}.
\end{proof}
\begin{lemma} \label{lem::interm}
  Let $\manM$ and $\manP$ be two regular submanifolds of $\RR^d$ that don't 
perfectly align and don't have full dimension. Let $\mathcal{L} = \manM \cap \manP$.
If $\manM$ and $\manP$ don't have boundary, then $\mathcal{L}$ is also
a manifold, and has strictly lower dimension than both the one of $\manM$ and
the one of $\manP$. If they have boundary, $\mathcal{L}$ is a union of at most
4 strictly lower dimensional manifolds. In both cases, $\mathcal{L}$ has measure 0 in both $\manM$ and $\manP$.
\end{lemma}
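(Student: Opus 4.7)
The plan is to invoke the standard transversality theorem from differential topology: if two boundary-free submanifolds of $\RR^d$ meet transversally at every common point, then their intersection is itself a submanifold whose dimension is $\dim \manM + \dim \manP - d$. Since the hypothesis ``do not perfectly align'' is precisely that every intersection point is transverse, this theorem applies directly in the boundary-free case, and the boundary case will be handled by splitting into four sub-cases corresponding to the four pairs $(\Int\manM,\Int\manP)$, $(\Int\manM,\partial\manP)$, $(\partial\manM,\Int\manP)$, $(\partial\manM,\partial\manP)$ in the definition.

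For the boundary-free case I would first note that for each $x\in\mathcal{L}$ we have $T_x\manM+T_x\manP=T_x\RR^d$, so by the local submersion/implicit-function argument, in a neighborhood of $x$ one can choose local coordinates in which $\manM$ and $\manP$ are coordinate subspaces meeting transversally; this exhibits $\mathcal{L}$ near $x$ as a regular submanifold of dimension $\dim\manM+\dim\manP-d$. The key arithmetic is then the strict inequality: since $\manP$ does not have full dimension, $\dim\manP<d$ gives $\dim\mathcal{L}=\dim\manM-(d-\dim\manP)<\dim\manM$, and symmetrically $\dim\mathcal{L}<\dim\manP$ from $\dim\manM<d$. This is exactly where the ``not full dimension'' hypothesis is used, and it is the place one has to be careful, because without it one of the two strict inequalities can fail.

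For the boundary case, I would write $\manM=\Int\manM\cup\partial\manM$ and similarly for $\manP$, so that
\[
\mathcal{L}=(\Int\manM\cap\Int\manP)\cup(\Int\manM\cap\partial\manP)\cup(\partial\manM\cap\Int\manP)\cup(\partial\manM\cap\partial\manP).
\]
Each term is the intersection of two boundary-free regular submanifolds of $\RR^d$ (recall $\partial\manM$ and $\partial\manP$ are boundary-free manifolds of one lower dimension, while the interiors are open submanifolds). By the definition of perfect alignment for manifolds with boundary, none of these four pairs perfectly align, and none has full dimension, so the boundary-free case applies to each term and produces a strictly lower-dimensional manifold. Their union is the claimed union of at most four pieces.

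Finally, for the measure-zero statement, I would appeal to the standard fact that a regular submanifold of strictly lower dimension has Lebesgue measure zero inside the ambient manifold (cover by countably many coordinate charts and use that $\RR^{k}\times\{0\}$ has measure zero in $\RR^{m}$ when $k<m$); applying this to each of the at most four pieces, whose dimensions are strictly less than $\dim\manM$ and $\dim\manP$, yields that $\mathcal{L}$ has measure zero in both $\manM$ and $\manP$. I expect the main obstacle to be bookkeeping rather than substance: making sure the transversality hypothesis is genuinely transferred to all four boundary/interior pairs, and that the dimension formula strictly drops in every case — both of which hinge on the ``no full dimension'' assumption.
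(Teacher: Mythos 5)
Your proposal is correct and follows essentially the same route as the paper: transversality (forced by the ``don't perfectly align'' hypothesis) gives that each intersection is a submanifold of dimension $\dim\manM+\dim\manP-d$, the ``no full dimension'' hypothesis yields the strict dimension drop, the boundary case is handled by the same four-way decomposition into interior/boundary pairs, and measure zero follows from lower dimensionality. The only cosmetic difference is that the paper treats $\dim\manM+\dim\manP<d$ as a separate (empty-intersection) case, which your formulation absorbs vacuously.
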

\begin{proof}
See Appendix \autoref{app::proofs}.
\end{proof}

We now state our perfect discrimination result for the case of two manifolds.
\begin{theo} \label{theo::perfDman}
  Let $\PP_r$ and $\PP_g$ be two distributions that have support contained in two closed manifolds 
$\mathcal{M}$ and $\mathcal{P}$ that don't perfectly align and don't have full dimension.
We further assume
that $\PP_r$ and $\PP_g$ are continuous
in their respective manifolds, meaning that if there is a set $A$ with measure 0 in
$\mathcal{M}$, then $\PP_r(A) = 0$ (and analogously for $\PP_g$). Then,
there exists an optimal discriminator $D^*: \mathcal{X} \rightarrow [0, 1]$ that
has accuracy 1 and
for almost any $x$ in $\manM$ or $\manP$, $D^*$ is smooth in a neighbourhood of $x$
and $\nabla_xD^*(x) = 0$.
\end{theo}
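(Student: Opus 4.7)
The plan is to reduce Theorem \autoref{theo::perfDman} to the essentially-disjoint case by first showing that the intersection $\mathcal{L} := \manM \cap \manP$ is negligible under both distributions, and then replicating the Urysohn-style construction of Theorem \autoref{theo::perfDdisj} locally away from $\mathcal{L}$.

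First, I would apply Lemma \autoref{lem::interm}: since $\manM$ and $\manP$ do not perfectly align and neither has full dimension, $\mathcal{L}$ is a finite union of at most four strictly lower-dimensional manifolds, hence has measure zero inside both $\manM$ and $\manP$. The continuity hypothesis on $\PP_r$ and $\PP_g$ with respect to their carrier manifolds then gives $\PP_r(\mathcal{L}) = \PP_g(\mathcal{L}) = 0$, so all the mass lives on the disjoint pieces $\manM_0 := \manM \setminus \mathcal{L}$ and $\manP_0 := \manP \setminus \mathcal{L}$.

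Next, I would build $D^*$ ball by ball. For each $x \in \manM_0$, closedness of $\manP$ together with $x \notin \manP$ gives $r_x := d(x, \manP)/3 > 0$, with $B(x, r_x)$ disjoint from $\manP$; define $r_x$ analogously for $x \in \manP_0$. Set $U = \bigcup_{x \in \manM_0} B(x, r_x)$ and $V = \bigcup_{x \in \manP_0} B(x, r_x)$. The very same triangle-inequality trick from Theorem \autoref{theo::perfDdisj} (a common point $y$ would force $d(x_1, x_2) < r_{x_1} + r_{x_2} \leq \tfrac{2}{3}\, d(x_1, x_2)$) yields $U \cap V = \emptyset$. Define $D^* = \Ind_{U}$. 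Then $\PP_r[D^*=1] \geq \PP_r(\manM_0) = 1$ and $\PP_g[D^*=0] \geq \PP_g(\manP_0) = 1$, so $D^*$ has accuracy $1$; and for every $x \in \manM_0 \cup \manP_0$, $D^*$ is identically $1$ or identically $0$ on the open neighborhood $B(x, r_x)$, hence smooth there with $\nabla_x D^*(x) = 0$. Because $\mathcal{L}$ is null in both $\manM$ and $\manP$, this gives exactly the ``for almost any $x$'' conclusion.

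The main obstacle I anticipate is that the closures of $U$ and $V$ can meet inside $\mathcal{L}$ (the radii $r_x$ collapse as one approaches $\mathcal{L}$), so the clean global Urysohn argument of Theorem \autoref{theo::perfDdisj} does not hand us a \emph{globally} smooth $D^*$. This is precisely why the statement weakens ``smooth on $\manM \cup \manP$'' to ``smooth at almost every point'': the indicator construction above already meets this weaker bar, and if a genuinely smooth $D^*$ on $\mathcal{X} \setminus \mathcal{L}$ were desired, one could further dress it up with a locally finite partition of unity subordinate to $\{U, V, \mathcal{X} \setminus (\overline{U} \cup \overline{V})\}$ restricted to the open set $\mathcal{X} \setminus \mathcal{L}$.
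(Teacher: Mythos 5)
Your proposal is correct and takes essentially the same route as the paper's proof: use Lemma \autoref{lem::interm} together with the continuity assumption to discard $\mathcal{L} = \manM \cap \manP$, cover $\manM \setminus \mathcal{L}$ and $\manP \setminus \mathcal{L}$ by balls of one third the distance to the other manifold, and take $D^*$ to be the indicator of the resulting open set, which is locally constant (hence smooth with zero gradient) at every point off the null set $\mathcal{L}$. The paper's $\hat\manM$, $\hat\manP$ are built with the same factor-of-three radii, and your explicit triangle-inequality check of $U \cap V = \emptyset$ merely spells out what the paper asserts ``by construction.''
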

\begin{proof}
  By Lemma \autoref{lem::interm} we know that
$\mathcal{L} = \manM \cap \manP$ is strictly lower dimensional than both $\mathcal{M}$ and 
$\mathcal{P}$, and has measure 0 on both of them. By continuity, $\PP_r(\mathcal{L}) 
= 0$ and $\PP_g(\mathcal{L}) = 0$. Note that this implies the support of $\PP_r$ is contained
in $\mathcal{M} \setminus \mathcal{L}$ and the support of $\PP_g$ is contained in $\mathcal{P}
\setminus \mathcal{L}$. 

Let $x \in \mathcal{M} \setminus \mathcal{L}$. Therefore, $x \in \mathcal{P}^c$ (the complement of
$\manP$) which is an open
set, so there exists a ball of radius $\epsilon_x$ such that $B(x, \epsilon_x) \cap \mathcal{P}
= \emptyset$. This way, we define
$$\hat \manM = \bigcup_{x \in \mathcal{M} \setminus \mathcal{L}} B(x, \epsilon_x/3)$$
We define $\hat \manP$ analogously. Note that by construction these are both open sets
on $\RR^d$. Since $\manM \setminus \mathcal{L} \subseteq \hat \manM$, and $
\mathcal{P} \setminus \mathcal{L} \subseteq \hat \manP$, the support of $\PP_r$ and $\PP_g$
is contained in $\hat \manM$ and $\hat \manP$ respectively. As well by construction,
$\hat \manM \cap \hat \manP = \emptyset$.

Let us define $D^*(x) = 1$ for all $x \in \hat \manM$, and 0 elsewhere (clearly including
$\hat \manP$.
Since $\log D^*(x) = 0$ for all $x$ in the support of $\PP_r$ and $\log(1 - D^*(x))
= 0$ for all $x$ in the support of $\PP_g$, the discriminator is completely
optimal and has accuracy
1. Furthermore, let $x \in \hat \manM$. Since $\hat \manM$ is an open set and $D^*$ is constant
on $\hat \manM$, then $\nabla_x D^*|_{\hat \manM} \equiv 0$. Analogously, $\nabla_x D^*|_{\hat 
\manP} \equiv 0$. Therefore, the set of points where $D^*$ is non-smooth or has non-zero
gradient inside $\manM \cup \manP$ is contained in $\mathcal{L}$, which has null-measure
in both manifolds, therefore concluding the theorem.
\end{proof}
These two theorems tell us that there are perfect discriminators which are smooth and
constant almost everywhere in $\manM$ and $\manP$. The fact that the discriminator
is constant in both manifolds points to the fact that we won't really be able to learn
anything by backproping through it, as we shall see in the next subsection. To conclude
this general statement, we state the following theorem on the divergences of $\PP_r$
and $\PP_g$, whose proof is trivial and left as an exercise to the reader.
\begin{theo} \label{theo::div}
Let $\PP_r$ and $\PP_g$ be two distributions whose support lies in two manifolds
$\manM$ and $\manP$ that don't have full dimension and don't perfectly align. We
further assume that $\PP_r$ and $\PP_g$ are continuous in their respective manifolds.
Then,
\begin{align*}
  JSD(\PP_r \| \PP_g) &= \log 2 \\
  KL(\PP_r \| \PP_g) &= +\infty \\
  KL(\PP_g \| \PP_r) &= +\infty
\end{align*}
\end{theo}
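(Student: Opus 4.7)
The plan is to use Lemma \ref{lem::interm} to reduce all three identities to routine measure-theoretic calculations for two mutually singular distributions. Set $\mathcal{L} = \manM \cap \manP$; by Lemma \ref{lem::interm}, $\mathcal{L}$ has measure zero in both $\manM$ and $\manP$, and so the continuity hypothesis on $\PP_r$ and $\PP_g$ gives $\PP_r(\mathcal{L}) = \PP_g(\mathcal{L}) = 0$. Consequently the Borel sets $A := \manM \setminus \mathcal{L}$ and $B := \manP \setminus \mathcal{L}$ are disjoint, with $\PP_r(A) = 1$ and $\PP_g(B) = 1$, so $\PP_r$ and $\PP_g$ are mutually singular. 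Every remaining step is a direct consequence of this fact.

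For the Jensen--Shannon divergence, I would work at the level of measures rather than densities in $\RR^d$ (which do not exist here). Writing $\PP_A = \tfrac{1}{2}(\PP_r + \PP_g)$, we have $\PP_r, \PP_g \ll \PP_A$, with Radon--Nikodym derivative $d\PP_r/d\PP_A$ equal to $2$ on $A$ and $0$ $\PP_A$-a.e.\ on $A^c$, and symmetrically for $d\PP_g/d\PP_A$ on $B$. Then
\[
  KL(\PP_r \| \PP_A) = \int_A \log 2 \, d\PP_r = \log 2,
\]
and by the same computation $KL(\PP_g \| \PP_A) = \log 2$, so $JSD(\PP_r \| \PP_g) = \log 2$.

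For the two KL divergences, mutual singularity is again decisive: since $\PP_g(A) = 0$ while $\PP_r(A) = 1$, the measure $\PP_r$ is not absolutely continuous with respect to $\PP_g$, so by the standard measure-theoretic definition of relative entropy $KL(\PP_r \| \PP_g) = +\infty$. Swapping the roles of $\PP_r$ and $\PP_g$ gives $KL(\PP_g \| \PP_r) = +\infty$.

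There is really no hard step here; the only mildly delicate point is conceptual rather than technical, namely that one must interpret all three divergences via Radon--Nikodym derivatives against a dominating reference measure (here $\PP_A$, or the measures themselves) rather than via density ratios against Lebesgue measure on $\RR^d$, since by Lemma \ref{lem::glowdim} no such densities exist. Once that is accepted, Lemma \ref{lem::interm} does all of the geometric work and the rest is bookkeeping.
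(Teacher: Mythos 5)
Your proof is correct, and it is exactly the intended argument for this result, which the paper leaves as an exercise: as in the proof of Theorem \autoref{theo::perfDman}, Lemma \autoref{lem::interm} plus continuity on the manifolds gives $\PP_r(\mathcal{L}) = \PP_g(\mathcal{L}) = 0$, hence mutual singularity, from which the maximal JSD and infinite KLs follow by the standard Radon--Nikodym formulation of the divergences. Your remark that one must read the divergences against a dominating measure such as $\PP_A$ rather than Lebesgue densities (which do not exist here) is precisely the right conceptual point and is consistent with how the paper uses these quantities.
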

Note that these divergences will be maxed out even if the two manifolds lie arbitrarilly
close to each other. The samples of our generator might look impressively good, yet
both KL divergences will be infinity. Therefore, Theorem \autoref{theo::div} points
us to the fact that attempting to use divergences out of the box to test similarities between the distributions
we typically consider might be a terrible idea. Needless to say, if these divergencies
are always maxed out attempting to minimize them by gradient descent isn't really
possible. We would like to have a perhaps softer
measure, that incorporates a notion of distance between the points in the manifolds.
We will come back to this topic later in \autoref{sec::instnoise}, where we
explain an alternative metric and provide bounds on it that we are able to
analyze and optimize.
\subsection{The consequences, and the problems of each cost function}
Theorems \autoref{theo::perfDdisj} and \autoref{theo::perfDman}
showed one very important fact. If the two distributions we care about have
supports that are disjoint or lie on low dimensional manifolds,
the optimal discriminator will be perfect and its gradient will be zero almost everywhere.

\subsubsection{The original cost function}
We will now explore what happens when we pass gradients to the generator through a discriminator.
One crucial difference with the typical analysis done so far is that we will 
develop the theory for an
\textbf{approximation} to the optimal discriminator, instead of working with the (unknown) true
discriminator. We will prove that as the approximaton gets better, either we see vanishing gradients
or the massively unstable behaviour we see in practice, depending on which cost function we use.

In what follows, we denote by $\|D\|$ the norm 
$$ \|D\| = \sup_{x \in \mathcal{X}} |D(x)| + \|\nabla_x D(x)\|_2 $$
The use of this norm is to make the proofs simpler, but could have been done in another
Sobolev norm $\|\cdot\|_{1, p}$ for $p < \infty$ covered by the universal
approximation theorem in the sense that we can guarantee a neural network approximation in this
norm \citep{Hornik-UAT}.
\begin{theo}[\textbf{Vanishing gradients on the generator}] \label{theo::vanishing}
Let $g_\theta: \mathcal{Z} \rightarrow \mathcal{X}$ be a differentiable
function that induces a distribution $\PP_g$.
Let $\PP_r$ be the real data distribution. Let $D$ be a differentiable discriminator. If the
conditions of Theorems \autoref{theo::perfDdisj} or \autoref{theo::perfDman} are satisfied, $\|D - D^*\| < \epsilon$, and
$\EE_{z \sim p(z)} \left[\|J_\theta g_\theta(z)\|_2^2\right] \leq M^2$,
\footnote{Since $M$ can depend on $\theta$, this condition is trivially verified for a
uniform prior and a neural network. The case of a Gaussian prior requires more work because
we need to bound the growth on $z$, but is also true for current architectures.} then
$$ \|\nabla_\theta \EE_{z \sim p(z)}[\log(1-D(g_\theta(z)))]\|_2 < M\frac{\epsilon}{1 - \epsilon} $$
\end{theo}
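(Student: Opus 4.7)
The plan is to compute the gradient of the expectation by the chain rule, push the Euclidean norm inside via the triangle inequality, and then bound every factor pointwise using the sup-norm closeness of $D$ to $D^*$ together with the pointwise vanishing of $D^*$ and $\nabla_x D^*$ on the support of $\PP_g$ guaranteed by Theorems \autoref{theo::perfDdisj} and \autoref{theo::perfDman}.

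First I would differentiate under the expectation (which the smoothness of $g_\theta$ and $D$, together with the $L^2$ hypothesis on $J_\theta g_\theta$, justifies) and apply the chain rule to obtain
\[
\nabla_\theta \EE_{z \sim p(z)}[\log(1-D(g_\theta(z)))] \;=\; -\,\EE_{z}\!\left[\frac{(J_\theta g_\theta(z))^\top \nabla_x D(g_\theta(z))}{1 - D(g_\theta(z))}\right].
\]
Pushing the Euclidean norm inside via $\|\EE X\|_2 \leq \EE\|X\|_2$ and applying Cauchy--Schwarz to the numerator bounds the quantity of interest by
\[
\EE_{z}\!\left[\frac{\|\nabla_x D(g_\theta(z))\|_2\,\|J_\theta g_\theta(z)\|_2}{|1-D(g_\theta(z))|}\right].
\]

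Second, I would invoke the perfect-discriminator theorems: off a $\PP_g$-null subset of the support of $\PP_g$, both $D^*$ and $\nabla_x D^*$ vanish. Since the assumption $\|D - D^*\| < \epsilon$ simultaneously controls $|D - D^*|$ and $\|\nabla_x D - \nabla_x D^*\|_2$ uniformly on $\mathcal{X}$, for $p(z)$-almost every $z$ one gets $|D(g_\theta(z))| < \epsilon$ and $\|\nabla_x D(g_\theta(z))\|_2 < \epsilon$, and therefore $1 - D(g_\theta(z)) > 1 - \epsilon$. Substituting into the previous display, the integrand is dominated almost surely by $\tfrac{\epsilon}{1-\epsilon}\,\|J_\theta g_\theta(z)\|_2$.

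Third, Jensen's inequality applied to the concave square root yields $\EE_z\|J_\theta g_\theta(z)\|_2 \leq \sqrt{\EE_z\|J_\theta g_\theta(z)\|_2^2} \leq M$, which combined with the preceding bound produces the claimed $M\epsilon/(1-\epsilon)$. The only point requiring mild care is the ``almost every $z$'' reasoning in Step~2: Theorem \autoref{theo::perfDman} asserts $\nabla_x D^* = 0$ only off a set of manifold-measure zero in $\manP$, but the continuity hypothesis on $\PP_g$ ensures this exceptional set is $\PP_g$-null, and hence its preimage under $g_\theta$ is $p$-null. Apart from this measure-theoretic bookkeeping the proof is a direct chase of $\epsilon$'s through the chain-rule identity.
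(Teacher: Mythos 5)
Your proof is correct and follows essentially the same route as the paper's: chain rule, a Jensen-type inequality to move the norm inside the expectation, the pointwise bounds $\|\nabla_x D(g_\theta(z))\|_2 < \epsilon$ and $|1-D(g_\theta(z))| > 1-\epsilon$ obtained from $\|D-D^*\|<\epsilon$ together with $D^*=0$, $\nabla_x D^*=0$ on ($\PP_g$-almost all of) the support of $\PP_g$, and finally the bound by $M$. The only cosmetic difference is that the paper squares everything first and takes a square root at the end, whereas you apply Jensen to $\EE_z\|J_\theta g_\theta(z)\|_2$ at the last step; your explicit treatment of the almost-everywhere caveat from Theorem \autoref{theo::perfDman} is a welcome addition the paper glosses over.
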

\begin{proof}
In both proofs of Theorems \autoref{theo::perfDdisj} and \autoref{theo::perfDman}
we showed that $D^*$ is locally 0 on the support of $\PP_g$.
Then, using Jensen's inequality and the chain rule on this support we have
\begin{align*}
  \|\nabla_\theta \EE_{z \sim p(z)}[\log(1-D(g_\theta(z)))]\|_2^2 &\leq \EE_{z \sim p(z)}\left[
    \frac{\|\nabla_\theta D(g_\theta(z))\|_2^2}{|1-D(g_\theta(z))|^2}\ \right] \\
  &\leq \EE_{z \sim p(z)}\left[
    \frac{\|\nabla_x D(g_\theta(z))\|_2^2 \|J_\theta g_\theta(z)\|_2^2}
  {|1-D(g_\theta(z))|^2}\ \right] \\
  &<\EE_{z \sim p(z)} \left[ \frac{\left(\|\nabla_x D^*(g_\theta(z))\|_2 + \epsilon\right)^2
    \|J_\theta g_\theta(z)\|_2^2}{\left(|1-D^*(g_\theta(z))| - \epsilon\right)^2} \right] \\
  &=\EE_{z \sim p(z)} \left[ \frac{\epsilon^2 \|J_\theta g_\theta(z)\|_2^2}{\left(1 - \epsilon
  \right)^2} \right] \\
  &\leq M^2\frac{\epsilon^2}{\left(1 - \epsilon\right)^2}
\end{align*}
Taking square root of everything we get
$$ \|\nabla_\theta \EE_{z \sim p(z)}[\log(1-D(g_\theta(z)))]\|_2 < M\frac{\epsilon}{1 - \epsilon} $$
finishing the proof
\end{proof}
\begin{coro} Under the same assumptions of Theorem \autoref{theo::vanishing}
$$ \lim_{\|D - D^*\| \to 0} \nabla_\theta \EE_{z \sim p(z)}[\log(1-D(g_\theta(z)))] = 0$$
\end{coro}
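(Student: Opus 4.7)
The plan is to derive the corollary as an essentially immediate consequence of Theorem~\ref{theo::vanishing}. The theorem already provides the quantitative bound
$$ \|\nabla_\theta \EE_{z \sim p(z)}[\log(1-D(g_\theta(z)))]\|_2 < M\frac{\epsilon}{1 - \epsilon} $$
valid whenever $\|D - D^*\| < \epsilon$, with $M$ the uniform bound on $\EE[\|J_\theta g_\theta(z)\|_2^2]^{1/2}$. Since this bound holds for every admissible $\epsilon > 0$, the corollary is a direct $\varepsilon$--$\delta$ consequence.

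Concretely, given an arbitrary $\delta > 0$, I would choose $\epsilon_0 > 0$ small enough so that $M\epsilon_0/(1-\epsilon_0) < \delta$; such an $\epsilon_0$ exists because the map $\epsilon \mapsto M\epsilon/(1-\epsilon)$ is continuous at $0$ and vanishes there. Then for any discriminator $D$ with $\|D - D^*\| < \epsilon_0$, Theorem~\ref{theo::vanishing} yields $\|\nabla_\theta \EE_{z \sim p(z)}[\log(1-D(g_\theta(z)))]\|_2 < \delta$. This is exactly the statement that the gradient vector converges to $0$ (in Euclidean norm, hence componentwise) as $\|D - D^*\| \to 0$.

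Since the argument is mechanical, there is no real obstacle — the substantive work has already been done in the proof of Theorem~\ref{theo::vanishing}, in particular verifying that $D^*$ vanishes locally on the support of $\PP_g$ and applying Jensen together with the chain rule. The only thing to double-check is that $M$ does not itself depend on $D$ (it only depends on $g_\theta$ and the prior, as the footnote emphasizes), so that the same $M$ appears in the bound for every nearby $D$ and the right-hand side can be made arbitrarily small independently of which approximating $D$ is chosen.
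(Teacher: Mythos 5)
Your proposal is correct and matches the intended argument: the paper gives no separate proof of the corollary, treating it as an immediate consequence of the bound $M\epsilon/(1-\epsilon)$ from Theorem \autoref{theo::vanishing}, which is exactly the $\varepsilon$--$\delta$ argument you spell out. Your added check that $M$ is independent of the choice of $D$ is the only point worth making explicit, and you make it correctly.
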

This shows that as our discriminator gets better, the gradient of the generator vanishes. For
completeness, this was experimentally verified in \autoref{fig::vanishing}. The fact
that this happens is terrible, since the fact that the generator's cost function being close
to the Jensen Shannon divergence depends on the quality of this approximation. This points us
to a fundamental: either our updates to the discriminator will be inacurate, or they will vanish.
This makes it difficult to train using this cost function, or leave up to the user to decide the
precise amount of training dedicated to the discriminator, which can make GAN training extremely
hard.

\begin{figure}[t!]
  \centering
  \includegraphics[scale=.5]{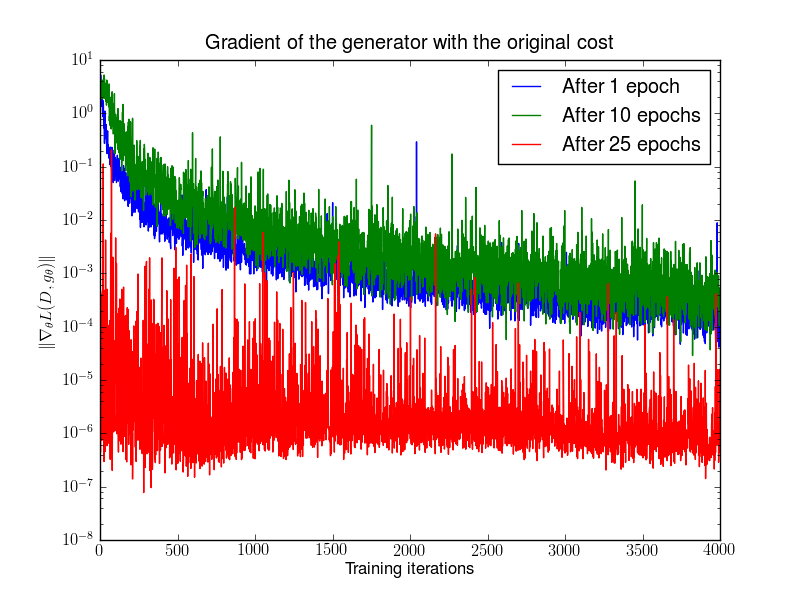}
  \caption{First, we trained a DCGAN for 1, 10 and 25 epochs.
Then, with the generator fixed we train a discriminator from scratch
and measure the gradients with the original cost function.
We see the gradient norms decay quickly, in the best case
5 orders of magnitude after 4000 discriminator iterations. 
Note the logarithmic scale.}
  \label{fig::vanishing}
\end{figure}

\subsubsection{The $-\log D$ alternative}
To avoid gradients vanishing when the discriminator is very confident, people have chosen to use
a different gradient step for the generator.
$$ \Delta \theta = \nabla_\theta \EE_{z \sim p(z)} \left[ - \log D(g_\theta(z)) \right] $$

We now state and prove for the first time which cost function is being optimized
by this gradient step. Later, we prove that while this gradient doesn't necessarily 
suffer from vanishing gradients, it does
cause massively unstable updates (that have been widely experienced in practice) under
the prescence of a noisy approximation to the optimal discriminator.

\begin{theo} Let $\PP_r$ and $\PP_{g_\theta}$ be two continuous distributions, 
with densities $P_r$ and $P_{g_\theta}$ respectively. Let $D^* = \frac{P_r}{P_{g_{\theta_0}} + P_r}$ 
be the optimal discriminator, fixed for a value $\theta_0$\footnote{This is important
since when backpropagating to the generator, the discriminator is assumed fixed}. Therefore,
\begin{equation} \label{eq::logD}
\EE_{z \sim p(z)}\left[-\nabla_\theta \log D^*(g_\theta(z))|_{\theta=\theta_0} \right] =
  \nabla_\theta \left[ KL(\PP_{g_\theta} \| \PP_r) - 2 JSD(\PP_{g_\theta} \| \PP_r)
  \right]|_{\theta=\theta_0}
\end{equation}
\end{theo}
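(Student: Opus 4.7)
The strategy is to rewrite both sides of \eqref{eq::logD} as the same integral $\int \log\frac{P_r + P_{g_{\theta_0}}}{P_r}\, \nabla_\theta P_{g_\theta}|_{\theta_0}\,\d{x}$, using three elementary ingredients: the pushforward identity $\EE_z[h(g_\theta(z))] = \int h(x) P_{g_\theta}(x)\,\d{x}$, differentiation under the integral sign, and the normalization identity $\int \nabla_\theta P_{g_\theta}(x)\,\d{x} = \nabla_\theta 1 = 0$.

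First I would handle the left-hand side. The crucial observation is that $D^*$ is fixed at $\theta_0$ and therefore constant in $\theta$. Rewriting the expectation as an integral against $P_{g_\theta}$ and pulling $\nabla_\theta$ through,
$$\EE_{z \sim p(z)}[-\nabla_\theta \log D^*(g_\theta(z))]|_{\theta=\theta_0} = -\int \log D^*(x)\, \nabla_\theta P_{g_\theta}(x)|_{\theta=\theta_0}\,\d{x},$$
and substituting $D^* = P_r/(P_r + P_{g_{\theta_0}})$ immediately gives the target integral.

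For the right-hand side I would expand $2JSD(\PP_{g_\theta}\|\PP_r) = KL(\PP_{g_\theta}\|\PP_A) + KL(\PP_r\|\PP_A)$ with $P_A = (P_r+P_{g_\theta})/2$, and differentiate each of the three KL divergences at $\theta_0$ via the product rule inside the integral. Every KL contributes a ``$\log$-piece'' $\int \nabla_\theta P_{g_\theta}|_{\theta_0}\log(\cdot)\,\d{x}$ from differentiating the outer $P_{g_\theta}$, plus a ``derivative-of-$\log$'' correction from differentiating a density sitting inside the $\log$. The correction from $\nabla_\theta KL(\PP_{g_\theta}\|\PP_r)$ is $\int \nabla_\theta P_{g_\theta}|_{\theta_0}\,\d{x}=0$ by normalization; the corrections from $\nabla_\theta KL(\PP_{g_\theta}\|\PP_A)$ and $\nabla_\theta KL(\PP_r\|\PP_A)$ are $-\int \frac{P_{g_{\theta_0}}}{P_r+P_{g_{\theta_0}}}\nabla_\theta P_{g_\theta}|_{\theta_0}\,\d{x}$ and $-\int \frac{P_r}{P_r+P_{g_{\theta_0}}}\nabla_\theta P_{g_\theta}|_{\theta_0}\,\d{x}$, which add to $-\int \nabla_\theta P_{g_\theta}|_{\theta_0}\,\d{x}=0$. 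Every non-$\log$ correction therefore vanishes.

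After this cleanup the right-hand side reduces to $\int \nabla_\theta P_{g_\theta}|_{\theta_0}\,[\log(P_{g_{\theta_0}}/P_r) - \log(P_{g_{\theta_0}}/P_A|_{\theta_0})]\,\d{x} = \int \nabla_\theta P_{g_\theta}|_{\theta_0}\,\log\frac{P_r+P_{g_{\theta_0}}}{2P_r}\,\d{x}$, and the stray $\log 2$ is again annihilated by normalization, matching the left-hand side. The only real technicality --- not a conceptual obstacle --- is justifying the two interchanges of $\nabla_\theta$ with $\int\cdot\,\d{x}$, which is routine under the smoothness and domination hypotheses implicit throughout the paper; everything else is careful bookkeeping of which factors depend on $\theta$ versus which are frozen at $\theta_0$.
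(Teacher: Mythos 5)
Your proof is correct, but it takes a genuinely different route from the paper. The paper does not compute anything from scratch: it writes $-\log D^* = \log(1-D^*) - \log\tfrac{D^*}{1-D^*}$, invokes the known result of \cite{Good-ea-GAN} that $\EE_{z \sim p(z)}[\nabla_\theta \log(1-D^*(g_\theta(z)))]|_{\theta=\theta_0} = \nabla_\theta \, 2JSD(\PP_{g_\theta}\|\PP_r)|_{\theta=\theta_0}$, and then uses the identity $KL(\PP_{g_\theta}\|\PP_r) = -\EE_{z\sim p(z)}\bigl[\log\tfrac{D^*(g_\theta(z))}{1-D^*(g_\theta(z))}\bigr] - KL(\PP_{g_\theta}\|\PP_{g_{\theta_0}})$, where the last term has vanishing gradient at $\theta_0$ because it is minimized there; subtracting the two displays gives the theorem. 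You instead expand all three KL integrals ($KL(\PP_{g_\theta}\|\PP_r)$ and the two halves of $2JSD$), differentiate under the integral sign, and observe that every non-logarithmic correction is killed by the normalization identity $\int \nabla_\theta P_{g_\theta}|_{\theta_0}\,\d x = 0$ (including the $\log 2$), so that both sides collapse to $\int \log\tfrac{P_r+P_{g_{\theta_0}}}{P_r}\,\nabla_\theta P_{g_\theta}|_{\theta_0}\,\d x$; I checked the bookkeeping and the corrections from the two JSD terms indeed sum to $-\int \nabla_\theta P_{g_\theta}|_{\theta_0}\,\d x = 0$ as you claim. What the paper's route buys is brevity and interpretability: it exhibits the new update literally as ``original-cost gradient minus log-odds gradient,'' leaning on two citable facts, though it quietly needs the observation that $\nabla_\theta KL(\PP_{g_\theta}\|\PP_{g_{\theta_0}})|_{\theta=\theta_0}=0$. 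What your route buys is self-containedness: no appeal to \cite{Good-ea-GAN} or to that minimization argument, at the cost of explicitly assuming the interchange of $\nabla_\theta$ with the integrals (twice on the left-hand side: expectation-to-density and then differentiation under the integral), which is the same level of regularity the paper's cited results implicitly require, so your proof is at the paper's standard of rigor.
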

Before diving into the proof, let's look at equation \eqref{eq::logD} for a second.
This is the inverted KL minus two JSD. First of all, the JSDs are in the opposite
sign, which means they are pushing for the distributions to be different, which
seems like a fault in the update. Second, the KL appearing in the equation is
$KL(\PP_g \| \PP_r)$, not the one equivalent to maximum likelihood. As
we know, this KL assigns an extremely high cost to generating fake looking samples,
and an extremely low cost on mode dropping; and the JSD is symetrical so it shouldn't
alter this behaviour. This explains what we see in practice, that GANs (when
stabilized) create good looking samples, and justifies what is commonly conjectured, that
GANs suffer from an extensive amount of mode dropping.
\begin{proof}
We already know by \cite{Good-ea-GAN} that 
$$ \EE_{z \sim p(z)} \left[\nabla_\theta \log(1 - D^*(g_\theta(z)))|_{\theta=\theta_0}\right] = 
  \nabla_\theta 2 JSD(\PP_{g_\theta} \| \PP_r)|_{\theta=\theta_0} $$
Furthermore, as remarked by \cite{Ferenc-blogpost},
\begin{align*}
 KL(\PP_{g_\theta} \| \PP_r) &= \EE_{x \sim \PP_{g_\theta}} \left[ \log \frac{P_{g_\theta}(x)}
  {P_r(x)} \right] \\
 &= \EE_{x \sim \PP_{g_\theta}} \left[ \log \frac{P_{g_{\theta_0}}(x)} {P_r(x)} \right]
 - \EE_{x \sim \PP_{g_\theta}} \left[ \log \frac{P_{g_\theta}(x)} {P_{g_{\theta_0}}(x)} \right]\\
 &= -\EE_{x \sim \PP_{g_\theta}} \left[ \log \frac{D^*(x)}{1 - D^*(x)} \right] - KL(\PP_{g_\theta} 
  \| \PP_{g_{\theta_0}}) \\
 &= -\EE_{z \sim p(z)}\left[ \log \frac{D^*(g_\theta(z))}{1-D^*(g_\theta(z))}\right] - 
  KL(\PP_{g_\theta} \| \PP_{g_{\theta_0}})
\end{align*}
Taking derivatives in $\theta$ at $\theta_0$ we get
\begin{align*}
  \nabla_\theta KL(\PP_{g_\theta} \| \PP_r)|_{\theta=\theta_0} &= -\nabla_\theta
    \EE_{z \sim p(z)}\left[ \log \frac{D^*(g_\theta(z))}{1-D^*(g_\theta(z))}\right]|_{\theta=\theta_0}
    -\nabla_\theta KL(\PP_{g_\theta} \| \PP_{g_{\theta_0}})|_{\theta=\theta_0} \\
    &= \EE_{z \sim p(z)}\left[  -\nabla_\theta
\log \frac{D^*(g_\theta(z))}{1-D^*(g_\theta(z))}\right]|_{\theta=\theta_0}
\end{align*}
Substracting this last equation with the result for the JSD, we obtain our desired result.
\end{proof}

We now turn to our result regarding the instability of a noisy version of the true distriminator.
\begin{theo}[\textbf{Instability of generator gradient updates}]
Let $g_\theta: \mathcal{Z} \rightarrow \mathcal{X}$ be a differentiable
function that induces a distribution $\PP_g$. Let $\PP_r$ be the real data distribution, with
either conditions of Theorems \autoref{theo::perfDdisj} or \autoref{theo::perfDman} satisfied.
Let $D$ be a discriminator such that $D^* - D = \epsilon$ is a centered Gaussian 
process indexed by $x$ and independent for every $x$ (popularly known as white noise)
and $\nabla_x D^* - \nabla_xD = r$ another independent centered Gaussian process indexed by
$x$ and independent for every $x$. Then, each coordinate of
$$ \EE_{z \sim p(z)} \left[ -\nabla_\theta \log D(g_\theta(z))\right] $$
is a centered Cauchy distribution with \textbf{infinite expectation and variance}.\footnote{Note
that the theorem holds regardless of the variance of $r$ and $\epsilon$. As the approximation
gets better, this error looks more and more as centered random noise due to the finite precision.}
\end{theo}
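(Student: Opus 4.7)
The plan is to reduce each coordinate of the gradient to a ratio of two independent centered Gaussians, invoke the classical fact that such a ratio is centered Cauchy, and then lift this through the expectation in $z$ using stability of the Cauchy family.

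By the chain rule,
$$-\nabla_\theta \log D(g_\theta(z)) \;=\; -\frac{J_\theta g_\theta(z)^T\, \nabla_x D(g_\theta(z))}{D(g_\theta(z))}.$$
The proofs of Theorems \autoref{theo::perfDdisj} and \autoref{theo::perfDman} show that on the support of $\PP_g$ the optimal discriminator satisfies $D^*(x) = 0$ and $\nabla_x D^*(x) = 0$ (the first from the explicit construction of $D^*$ on $\hat \manP$, the second from $D^*$ being locally constant there). Writing $D = D^* + \epsilon$ and $\nabla_x D = \nabla_x D^* + r$ and evaluating at $x = g_\theta(z)$, the previous display collapses to
$$-\nabla_\theta \log D(g_\theta(z)) \;=\; -\frac{J_\theta g_\theta(z)^T\, r(g_\theta(z))}{\epsilon(g_\theta(z))}.$$

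Now fix a coordinate $i$ and a value of $z$. The numerator is a deterministic linear functional of the centered Gaussian vector $r(g_\theta(z))$, hence itself a centered Gaussian with some variance $\sigma_i(z)^2$, while the denominator is an independent centered Gaussian by hypothesis. The ratio of two such variables is a centered Cauchy of scale $\sigma_i(z)/\sigma_\epsilon$, and in particular has infinite mean and variance. So pointwise in $z$ the integrand is centered Cauchy in the $(\epsilon, r)$-randomness.

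It then remains to push this through $\EE_{z \sim p(z)}$. Because $\epsilon$ and $r$ are white noise, the Cauchy random variables indexed by distinct $z$ are mutually independent, and the Cauchy family is closed under linear combinations of independent members with scales that add. Interpreting the expectation as a limit of Riemann-type sums over finer and finer partitions of $\mathcal{Z}$, at each discretization level the partial sum is \emph{exactly} centered Cauchy, with scale converging to $\int p(z)\,\sigma_i(z)/\sigma_\epsilon\, \d{z}$, which is finite and strictly positive generically. Passing to the limit yields a centered Cauchy, giving infinite expectation and variance as claimed. I expect the main obstacle to be exactly this last passage: pointwise evaluation of white noise is not literally well-defined, so the heuristic integral of independent Cauchy variables must be justified either through the discretization-plus-limit argument above, or, equivalently, by showing convergence of the characteristic functions of the partial sums to the Cauchy characteristic function.
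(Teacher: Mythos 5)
Your argument follows essentially the same route as the paper's proof: the chain rule together with the vanishing of $D^*$ and $\nabla_x D^*$ on the support of $\PP_g$ reduces each coordinate to a ratio of independent centered Gaussians, hence a centered Cauchy, and averaging the independent Cauchy variables over $z$ again yields a centered Cauchy with infinite mean and variance. Your discretization-plus-characteristic-function treatment of $\EE_{z \sim p(z)}$ is simply a more explicit handling of the measurability technicality that the paper itself flags in a footnote (and bypasses via a stochastic-differential-equation formulation), so the two proofs are substantively identical.
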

\begin{proof}
Let us remember again that in this case $D$ is locally constant equal to 0 on the support of $\PP_g$.
We denote $r(z), \epsilon(z)$ the random variables $r(g_\theta(z)), \epsilon(g_\theta(z))$.
By the chain rule and the definition of $r, \epsilon$, we get
\begin{align*}
  \EE_{z \sim p(z)} \left[ -\nabla_\theta \log D(g_\theta(z)) \right] &= 
    \EE_{z \sim p(z)} \left[ -\frac{J_\theta g_\theta(z) \nabla_x D(g_\theta(z))}
    {D(g_\theta(z))} \right] \\
  &= \EE_{z \sim p(z)} \left[ -\frac{J_\theta g_\theta(z) r(z)}
    {\epsilon(z)} \right] \\
\end{align*}
Since $r(z)$ is a centered Gaussian distribution, multiplying by a matrix doesn't change this
fact. Furthermore, when we divide by $\epsilon(z)$, a centered
Gaussian independent from the numerator, we get a centered Cauchy random variable on every
coordinate. Averaging
over $z$ the different independent Cauchy random variables again yields 
a centered Cauchy distribution.
\footnote{A note on technicality: when $\epsilon$ is defined as such, the remaining
process is not measurable in $x$, so we can't take the expectation in $z$ trivially.
This is commonly bypassed, and can be formally worked out
by stating the expectation as the result of a stochastic differential equation.}
\end{proof}
Note that even if we ignore the fact that the updates have infinite variance, we still arrive
to the fact that the distribution of the updates is centered, meaning that if we bound the updates
the expected update will be 0, providing no feedback to the gradient.

Since the assumption that the noises of $D$ and $\nabla D$ are decorrelated is albeit
too strong, we show in \autoref{fig::logDnoise} how the norm of the gradient grows drastically
as we train the discriminator closer to optimality, at any stage in training of a
well stabilized DCGAN except when it has already converged. In all cases, using this
updates lead to a notorious decrease in sample quality. The noise in the curves
also shows that the variance of the gradients is increasing, which is known
to delve into slower convergence and more unstable behaviour in the optimization
\citep{Bottou-ea-Opt}.
\begin{figure}[t!]
  \centering
  \includegraphics[scale=.5]{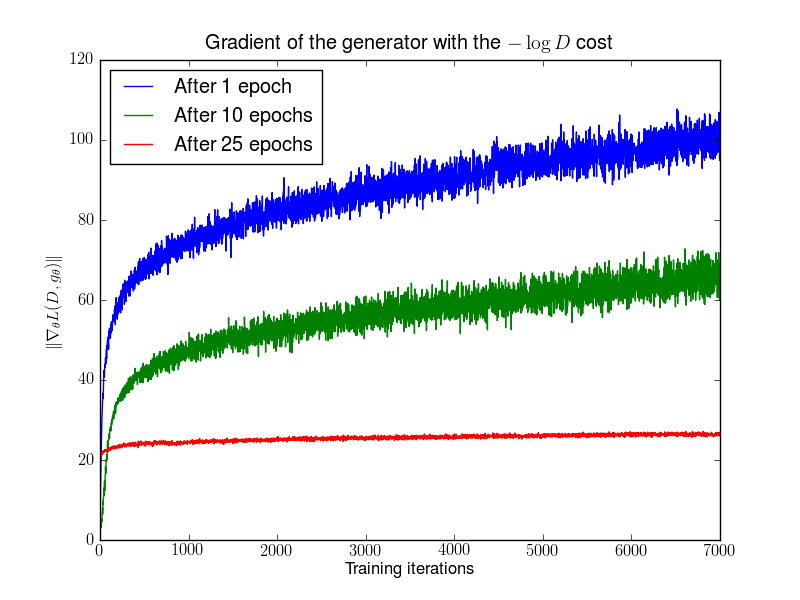}
  \caption{First, we trained a DCGAN for 1, 10 and 25 epochs.
Then, with the generator fixed we train a discriminator from scratch
and measure the gradients with the $- \log D$ cost function.
We see the gradient norms grow quickly. Furthermore, the noise in
the curves shows that the variance of the gradients is also 
increasing. All these gradients lead to updates that lower sample
quality notoriously.}
  \label{fig::logDnoise}
\end{figure}

\section{Towards softer metrics and distributions} \label{sec::instnoise}

An important question now is how to fix the instability and vanishing gradients issues.
Something we can do to break the assumptions of these theorems is add continuous noise to the
inputs of the discriminator, therefore smoothening the distribution of the probability mass.
\begin{theo} \label{theo::noisydist}
If $X$ has distribution $\PP_X$ with support on $\manM$ and
$\epsilon$ is an absolutely continuous random variable with density $P_\epsilon$, 
then $\PP_{X+ \epsilon}$ is
absolutely continuous with density
\begin{align*}
P_{X + \epsilon}(x) &= \EE_{y \sim \PP_X}\left[P_\epsilon(x - y)\right] \\
                    &= \int_\manM P_\epsilon(x - y) \d \PP_X(y)
\end{align*}
\end{theo}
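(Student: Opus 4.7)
The plan is to treat this as a standard convolution-of-measures fact and prove it by a Fubini--Tonelli exchange, keeping in mind the (implicit) independence of $X$ and $\epsilon$. The only real ingredients are independence, translation-invariance of Lebesgue measure, and nonnegativity of the integrand so that Tonelli applies unconditionally.

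First I would fix an arbitrary Borel set $A \subseteq \mathcal{X}$ and compute $\PP_{X+\epsilon}(A)$ by conditioning on $X$. Because $X$ and $\epsilon$ are independent and $X$ is supported on $\manM$,
$$\PP_{X+\epsilon}(A) = \int_{\manM} \PP(\epsilon \in A - y)\, d\PP_X(y),$$
where $A - y = \{x - y : x \in A\}$. Next I would use that $\epsilon$ has density $P_\epsilon$ together with translation-invariance of Lebesgue measure to rewrite $\PP(\epsilon \in A - y) = \int_{A-y} P_\epsilon(u)\, du = \int_A P_\epsilon(x-y)\, dx$.

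Substituting and invoking Tonelli on the nonnegative, jointly measurable integrand $(x,y)\mapsto P_\epsilon(x-y)\,\Ind_A(x)$ to swap the order of integration, I would obtain
$$\PP_{X+\epsilon}(A) = \int_A \left[ \int_{\manM} P_\epsilon(x - y)\, d\PP_X(y) \right] dx.$$
Since this holds for every Borel $A$, the inner bracketed expression is a Radon--Nikodym derivative of $\PP_{X+\epsilon}$ with respect to Lebesgue measure, establishing both absolute continuity and the claimed formula; the equivalent expectation form is simply the definition $\EE_{y\sim \PP_X}[P_\epsilon(x-y)] = \int_\manM P_\epsilon(x-y)\, d\PP_X(y)$.

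The only real subtlety, and the step I would pause on, is the measurability/integrability bookkeeping needed to apply Tonelli cleanly: one must check that $(x,y) \mapsto P_\epsilon(x-y)$ is jointly Borel measurable (which follows from measurability of $P_\epsilon$ composed with the continuous subtraction map) and that the resulting inner integral, a priori defined only a.e. in $x$, is itself Borel measurable as a function of $x$, which is again the standard Fubini--Tonelli conclusion. Finiteness of the density $dx$-a.e. is automatic from the fact that it integrates to $1$ over $\mathcal{X}$.
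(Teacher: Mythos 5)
Your proposal is correct and follows essentially the same route as the paper's proof: independence plus a Fubini--Tonelli exchange to write $\PP_{X+\epsilon}(A)=\int_A \EE_{y\sim\PP_X}[P_\epsilon(x-y)]\,dx$ for all Borel $A$, then identify the density via Radon--Nikodym. The only (harmless) difference is that the paper first verifies absolute continuity separately on null sets before computing the density, whereas you obtain both at once from the single identity, which is a slight streamlining rather than a different argument.
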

\begin{proof}
See Appendix \autoref{app::proofs}.
\end{proof}
\begin{coro}
  \begin{itemize}
    \item If $\epsilon \sim \mathcal{N}(0, \sigma^2 I)$ then
\begin{align*}
P_{X + \epsilon}(x) &= 
  \frac{1}{Z} \int_\manM e^{-\frac{\|y-x\|^2}{2 \sigma^2}} \d \PP_X(y)
\end{align*}
    \item If $\epsilon \sim \mathcal{N}(0, \Sigma)$ then
$$ P_{X + \epsilon}(x) = 
\frac{1}{Z} \EE_{y \sim \PP_X}\left[e^{-\frac{1}{2}\|y - x\|_{\Sigma^{-1}}^2}\right]$$
    \item If $P_\epsilon(x) \propto \frac{1}{\|x\|^{d+1}}$ then 
$$ P_{X + \epsilon}(x) = 
\frac{1}{Z} \EE_{y \sim \PP_X}\left[\frac{1}{\|x-y\|^{d+1}}\right]$$
  \end{itemize}
\end{coro}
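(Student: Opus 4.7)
The plan is to read this corollary as three direct substitutions into Theorem \autoref{theo::noisydist}. That theorem already supplies the integral representation
\[ P_{X+\epsilon}(x) = \int_\manM P_\epsilon(x-y)\,\d\PP_X(y), \]
so for each of the three noise models it suffices to write down the density of $\epsilon$ and absorb the (constant) normalizer into a single symbol $Z$.

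First I would handle the isotropic Gaussian case. The density of $\epsilon \sim \mathcal{N}(0,\sigma^2 I)$ on $\RR^d$ is $(2\pi\sigma^2)^{-d/2}\exp(-\|u\|^2/(2\sigma^2))$. Substituting $u = x-y$, pulling the constant prefactor out of the integral, and renaming it $Z$ reproduces the stated formula. Second, for the general covariance case I plug in the density $(2\pi)^{-d/2}|\Sigma|^{-1/2}\exp(-\tfrac{1}{2}(x-y)^\top\Sigma^{-1}(x-y))$, rewrite the quadratic form using the notation $\|u\|_{\Sigma^{-1}}^2 = u^\top\Sigma^{-1}u$ introduced in the statement, and recognise the resulting integral against $\PP_X$ as the expectation $\EE_{y\sim\PP_X}[\,\cdot\,]$; again the prefactor $(2\pi)^{d/2}|\Sigma|^{1/2}$ is absorbed into $Z$.

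The heavy-tailed case is purely a definition chase: by hypothesis $P_\epsilon(u) = Z^{-1}\|u\|^{-(d+1)}$, so the integral representation from the theorem directly gives the third expression with no further manipulation.

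There is no real obstacle here beyond bookkeeping. The only mild subtlety worth flagging is that $\|u\|^{-(d+1)}$ fails to be integrable at the origin of $\RR^d$, so the third item should be read either with an implicit truncation near zero or as a schematic statement about the tail shape of $P_\epsilon$; the first two items, on the other hand, are completely rigorous and follow by plain substitution into Theorem \autoref{theo::noisydist}.
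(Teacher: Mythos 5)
Your proposal is correct and matches the paper's intent exactly: the corollary is stated as an immediate consequence of Theorem \autoref{theo::noisydist}, obtained by substituting each noise density $P_\epsilon(x-y)$ into the formula $P_{X+\epsilon}(x)=\int_\manM P_\epsilon(x-y)\,\d\PP_X(y)$ and absorbing the normalizing constants into $Z$, which is precisely what you do. Your side remark that $\|u\|^{-(d+1)}$ is not integrable at the origin (so the third item should be read as describing the tail behaviour or with an implicit regularization) is a fair and accurate observation that the paper itself glosses over.
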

This theorem therefore tells us that the density $P_{X + \epsilon}(x)$ \textbf{is inversely proportional
to the average distance to points in the support of $\PP_X$, weighted by the probability
of these points}. In the case of the support of $\PP_X$ being a manifold, we will have the weighted
average of the distance to the points along the manifold. How we choose the distribution of the
noise $\epsilon$ will impact the notion of distance we are choosing. In our corolary, for
example, we can see the effect of changing the covariance matrix by altering the norm 
inside the exponential. Different noises with different types of decays can therefore be used.

Now, the optimal discriminator between $\PP_{g+\epsilon}$ and $\PP_{r + \epsilon}$ is
$$ D^*(x) = \frac{P_{r+\epsilon}(x)}{P_{r+\epsilon}(x) + P_{g+ \epsilon}(x)}$$
and we want to calculate what the gradient passed to the generator is.
\begin{theo} \label{theo::gradnoise}
Let $\PP_r$ and $\PP_g$ be two distributions with support on $\manM$ and $\manP$
respectively, with $\epsilon \sim \mathcal{N}(0, \sigma^2I)$. Then, the gradient passed to the
generator has the form
\begin{align}
\EE_{z \sim p(z)} &\left[ \nabla_\theta \log(1 - D^*(g_\theta(z))) \right] \label{eq::update1}\\
&= \EE_{z \sim p(z)}\bigg[a(z) \int_\manM P_\epsilon(g_\theta(z)-y) \nabla_\theta 
\|g_\theta(z)-y\|^2 \d \PP_r(y) \nonumber
\\&- b(z) \int_\manP P_\epsilon(g_\theta(z)-y) \nabla_\theta 
\|g_\theta(z) - y\|^2 \d \PP_g(y) \bigg] \nonumber
\end{align}
where $a(z)$ and $b(z)$ are positive functions. Furthermore, $b > a$ if and only if
$P_{r + \epsilon} > P_{g + \epsilon}$, and $b < a$ if and only if $P_{r + \epsilon} < P_{g
+\epsilon}$.
\end{theo}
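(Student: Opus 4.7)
The plan is a direct chain-rule calculation. I first observe that
\[
1 - D^*(x) = \frac{P_{g+\epsilon}(x)}{P_{r+\epsilon}(x) + P_{g+\epsilon}(x)},
\]
so, abbreviating $N(x) := P_{r+\epsilon}(x) + P_{g+\epsilon}(x)$,
\[
\log(1-D^*(x)) = \log P_{g+\epsilon}(x) - \log N(x).
\]
Treating $D^*$ as a fixed function of $x$ (the standard convention when backpropagating through the generator, used already in the preceding theorem on $-\log D$), differentiating in $\theta$ and collecting over a common denominator yields
\[
\nabla_\theta \log(1-D^*(g_\theta(z))) = \frac{P_{r+\epsilon}(g_\theta(z))}{P_{g+\epsilon}(g_\theta(z))\,N(g_\theta(z))}\,\nabla_\theta P_{g+\epsilon}(g_\theta(z)) \;-\; \frac{1}{N(g_\theta(z))}\,\nabla_\theta P_{r+\epsilon}(g_\theta(z)).
\]

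Next I substitute the integral representation from Theorem \autoref{theo::noisydist}, namely $P_{r+\epsilon}(x) = \int_\manM P_\epsilon(x-y)\,\d \PP_r(y)$ and analogously for $P_{g+\epsilon}$ over $\manP$, and swap $\nabla_\theta$ with the integral. For Gaussian $\epsilon$, a direct computation gives the identity
\[
\nabla_\theta P_\epsilon(g_\theta(z)-y) = -\tfrac{1}{2\sigma^2}\,P_\epsilon(g_\theta(z)-y)\,\nabla_\theta \|g_\theta(z)-y\|^2,
\]
which is exactly the integrand that appears in \eqref{eq::update1}. Reading off coefficients, the two minus signs multiplying the $\manM$ term cancel to produce the positive prefactor $a(z) = \tfrac{1}{2\sigma^2\,N(g_\theta(z))}$, while the $\manP$ term comes out with the negative prefactor $-b(z)$ where $b(z) = \tfrac{P_{r+\epsilon}(g_\theta(z))}{2\sigma^2\,P_{g+\epsilon}(g_\theta(z))\,N(g_\theta(z))}$, also positive. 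Taking $\EE_{z \sim p(z)}$ of both sides produces \eqref{eq::update1}, and the ratio $b(z)/a(z) = P_{r+\epsilon}(g_\theta(z))/P_{g+\epsilon}(g_\theta(z))$ immediately delivers the last clause: $b > a$ iff $P_{r+\epsilon} > P_{g+\epsilon}$, and $b < a$ iff $P_{r+\epsilon} < P_{g+\epsilon}$, both evaluated at $g_\theta(z)$.

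The only real analytical obstacle is the interchange of $\nabla_\theta$ with the integrals against $\PP_r$ and $\PP_g$; since these measures are allowed to be singular with respect to Lebesgue on $\manM$ and $\manP$, I cannot invoke an $L^1(\RR^d)$ dominating function. Instead I apply the measure-theoretic Leibniz rule: on a compact neighbourhood of the parameter $\theta$, both $P_\epsilon(g_\theta(z)-y)$ and its $\theta$-derivative are uniformly bounded in $y$ (because $P_\epsilon$ is a Schwartz function and $\nabla_\theta g_\theta(z)$ is continuous in $\theta$), so a constant function dominates against the probability measures $\PP_r$ and $\PP_g$. Beyond that, the argument is sign-bookkeeping, and the one spot that needs care is tracking the two sources of minus signs (the chain rule on $\log(1-D^*)$ and the derivative of the Gaussian exponent) to confirm they recombine to give $+a(z)$ on the $\manM$ integral and $-b(z)$ on the $\manP$ integral as claimed.
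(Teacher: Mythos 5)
Your proposal is correct and follows essentially the same route as the paper's proof: differentiate $\log(1-D^*(g_\theta(z))) = \log P_{g+\epsilon} - \log\bigl(P_{g+\epsilon}+P_{r+\epsilon}\bigr)$ by the chain rule, substitute the representation $P_{\cdot+\epsilon}(x)=\int P_\epsilon(x-y)\,\d\PP_\cdot(y)$ from Theorem \autoref{theo::noisydist}, and use the Gaussian identity $\nabla_\theta P_\epsilon(g_\theta(z)-y) = -\tfrac{1}{2\sigma^2}P_\epsilon(g_\theta(z)-y)\nabla_\theta\|g_\theta(z)-y\|^2$, recovering exactly the paper's $a(z)$ and $b(z)$ and the ratio $b/a = P_{r+\epsilon}/P_{g+\epsilon}$. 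Your explicit justification of differentiating under the integrals against $\PP_r$ and $\PP_g$ (boundedness of the Gaussian and its derivative on a compact $\theta$-neighbourhood, against probability measures) is a small welcome addition that the paper leaves implicit.
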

This theorem proves that we will drive our samples $g_\theta(z)$ \textbf{towards points along
the data manifold, weighted by their probability and the distance from our samples}. Furthermore,
the second term drives our points away from high probability samples, again, weighted by the
sample manifold and distance to these samples. This is similar in spirit 
to contrastive divergence, where
we lower the free energy of our samples and increase the free energy of data points. The importance
of this term is seen more clearly when we have samples that have higher probability of coming
from $\PP_g$ than from $\PP_r$. In this case, we will have $b > a$ and the second term will
have the strength to lower the probability of this too likely samples. Finally, if
there's an area around $x$ that has the same probability to come from $\PP_g$ than $\PP_r$, 
the gradient contributions between the two terms will cancel, therefore stabilizing the gradient
when $\PP_r$ is similar to $\PP_g$.

There is one important problem with taking gradient steps exactly of the form \eqref{eq::update1},
which is that in that case, $D$ will disregard errors that lie exactly in $g(\mathcal{Z})$, since
this is a set of measure 0. However, $g$ will be optimizing its cost only on that
space. This will make the discriminator extremely susceptible to adversarial examples, and will render
low cost on the generator without high cost on the discriminator, and lousy meaningless samples.
This is easilly seen when we realize
the term inside the expectation of equation \eqref{eq::update1} will be a positive scalar times $\nabla_x \log(1 - D^*(x)) \nabla_\theta
g_\theta(z)$, which is the directional derivative towards the exact adversarial term of \cite{Good-ea-Adv}. Because of this,
it is important to backprop through noisy samples in the generator as well. This will yield a crucial
benefit: the generator's backprop term will be through samples on a set of positive measure that
the discriminator will care about. Formalizing this notion, the actual gradient through the
generator will now be proportional to $\nabla_\theta JSD(\PP_{r + \epsilon} \| \PP_{g + \epsilon})$,
which will make the two noisy distributions match. As we anneal the noise, this will make $\PP_r$ and
$\PP_g$ match as well. For completeness, we show the smooth gradient we get
in this case. The proof is identical to the one of Theorem \autoref{theo::gradnoise}, so we leave it
to the reader.
\begin{coro} \label{coro::gradnoise} Let $\epsilon, \epsilon' \sim \mathcal{N}(0, \sigma^2 I)$ and 
$\tilde g_\theta(z) = g_\theta(z) + \epsilon'$, then
\begin{align}
\EE_{z \sim p(z), \epsilon'} &\left[ \nabla_\theta \log(1 - D^*(\tilde g_\theta(z))) \right] \label{eq::update2}\\
&= \EE_{z \sim p(z), \epsilon'}\bigg[a(z) \int_\manM P_\epsilon(\tilde g_\theta(z)-y) \nabla_\theta 
\|\tilde g_\theta(z)-y\|^2 \d \PP_r(y) \nonumber
\\&- b(z) \int_\manP P_\epsilon(\tilde g_\theta(z)-y) \nabla_\theta 
\|\tilde g_\theta(z) - y\|^2 \d \PP_g(y) \bigg] \nonumber
\\ &= 2 \nabla_\theta JSD(\PP_{r + \epsilon} \| \PP_{g + \epsilon}) \nonumber
\end{align}
\end{coro}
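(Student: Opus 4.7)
The plan is to mimic the proof of Theorem~\autoref{theo::gradnoise} line-by-line with $\tilde g_\theta(z)$ in place of $g_\theta(z)$, and then to identify the result with $2\nabla_\theta JSD(\PP_{r+\epsilon} \| \PP_{g+\epsilon})$ via the classical envelope-style identity for the GAN loss. The key observation that enables reuse of the earlier argument is that both $z$ and $\epsilon'$ are independent of $\theta$, so the reparameterization $\tilde g_\theta(z) = g_\theta(z) + \epsilon'$ lets me push $\nabla_\theta$ inside the outer expectation $\EE_{z,\epsilon'}$, and the image law of $\tilde g_\theta(z)$ under $(z,\epsilon')$ is exactly $\PP_{g+\epsilon}$.

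First I would write $\nabla_\theta \log(1 - D^*(\tilde g_\theta(z))) = -\,(1 - D^*(\tilde g_\theta(z)))^{-1}\, J_\theta \tilde g_\theta(z)^\top \nabla_x D^*(\tilde g_\theta(z))$ and expand $\nabla_x D^*$ by the quotient rule from $D^* = P_{r+\epsilon}/(P_{r+\epsilon} + P_{g+\epsilon})$. This produces two strictly positive scalar coefficients multiplying $\nabla_x P_{r+\epsilon}(\tilde g_\theta(z))$ and $-\nabla_x P_{g+\epsilon}(\tilde g_\theta(z))$, and reading off the prefactors immediately gives $b > a \iff P_{r+\epsilon}(\tilde g_\theta(z)) > P_{g+\epsilon}(\tilde g_\theta(z))$. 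Using the explicit Gaussian density from the corollary of Theorem~\autoref{theo::noisydist}, I differentiate under the integral to get $\nabla_x P_{r+\epsilon}(x) = -\tfrac{1}{2\sigma^2}\int_\manM P_\epsilon(x-y)\,\nabla_x \|x-y\|^2\,d\PP_r(y)$, and similarly for $P_{g+\epsilon}$. Chaining $\nabla_x \|x-y\|^2$ with $J_\theta \tilde g_\theta(z)$ via the chain rule turns it into $\nabla_\theta \|\tilde g_\theta(z) - y\|^2$, and absorbing all remaining constants into (redefined) positive functions $a(z), b(z)$ yields the first displayed equality.

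For the final equality, I invoke the standard GAN identity applied to the noisy pair $\PP_{r+\epsilon}, \PP_{g+\epsilon}$: since $D^*$ is the optimal discriminator between them, the cost
\[ L_\epsilon(D, \theta) = \EE_{x \sim \PP_{r+\epsilon}}[\log D(x)] + \EE_{x \sim \PP_{g+\epsilon}}[\log(1-D(x))] \]
satisfies $L_\epsilon(D^*, \theta) = 2\,JSD(\PP_{r+\epsilon}\|\PP_{g+\epsilon}) - 2\log 2$. Treating $D^* = D^*_{\theta_0}$ as fixed at the point of differentiation (the same envelope argument already used in the $-\log D$ theorem), the first expectation has no $\theta$-dependence, while the second, by reparameterization, equals $\EE_{z,\epsilon'}[\log(1 - D^*(\tilde g_\theta(z)))]$. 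Differentiating both sides in $\theta$ yields the claimed $2\nabla_\theta JSD(\PP_{r+\epsilon}\|\PP_{g+\epsilon})$.

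The main obstacle is purely bookkeeping: carefully tracking signs and factors through the quotient-rule expansion to confirm that $a(z), b(z)$ are indeed positive and satisfy the stated comparison with $P_{r+\epsilon}, P_{g+\epsilon}$. A minor technical point is justifying the exchange of $\nabla_\theta$ with both the $(z,\epsilon')$ expectation and the inner $d\PP_r, d\PP_g$ integrals; under the same regularity on $g_\theta$ used in Theorem~\autoref{theo::gradnoise} and the Gaussian tail decay of $P_\epsilon$, dominated convergence makes both swaps routine.
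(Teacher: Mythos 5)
Your proposal is correct and follows essentially the route the paper intends: the first equality is obtained by repeating the proof of Theorem \autoref{theo::gradnoise} verbatim with $\tilde g_\theta(z)$ in place of $g_\theta(z)$ (your quotient-rule bookkeeping is just an algebraically equivalent rearrangement of the paper's log-derivative expansion, yielding the same $a$, $b$ and their comparison), and the final equality is exactly the Goodfellow/envelope identity the paper already invokes in the $-\log D$ theorem, applied to $\PP_{r+\epsilon}$ and $\PP_{g+\epsilon}$ after noting that the law of $\tilde g_\theta(z)$ under $(z,\epsilon')$ is $\PP_{g+\epsilon}$. No gaps beyond the level of rigor the paper itself adopts.
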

In the same as with Theorem \autoref{theo::gradnoise}, $a$ and $b$ will have the same properties. The main difference
is that we will be moving all our noisy samples towards the data manifold, which can be thought of
as moving a small neighbourhood of samples towards it. This will protect the 
discriminator against measure 0 adversarial examples.
\begin{proof}[Proof of theorem \autoref{theo::gradnoise}]
Since the discriminator is assumed fixed when backproping to the generator, the only thing
that depends on $\theta$ is $g_\theta(z)$ for every $z$. By taking derivatives on our cost function
\begin{align*}
\EE_{z \sim p(z)} &\left[ \nabla_\theta \log(1 - D^*(g_\theta(z))) \right] \\
  &= \EE_{z \sim p(z)} \left[ \nabla_\theta \log \frac{P_{g+\epsilon}(g_\theta(z))}
    {P_{r+\epsilon}(g_\theta(z)) + P_{g+ \epsilon}(g_\theta(z))}\right] \\
  &= \EE_{z \sim p(z)} \left[ \nabla_\theta \log P_{g + \epsilon}(g_\theta(z))  -
    \nabla_\theta \log \left(P_{g + \epsilon}(g_\theta(z)) + P_{r + \epsilon}(g_\theta(z))
    \right)\right] \\
  &= \EE_{z \sim p(z)} \left[ \frac{\nabla_\theta P_{g + \epsilon}
    (g_\theta(z)) }{P_{g+ \epsilon}(g_\theta(z))} - \frac{\nabla_\theta P_{g + \epsilon}
    (g_\theta(z)) + \nabla_\theta P_{r + \epsilon}(g_\theta(z))}
    {P_{g + \epsilon}(g_\theta(z)) + P_{r + \epsilon}(g_\theta(z))}\right] \\
  &= \EE_{z \sim p(z)} \bigg[ \frac{1}{P_{g+\epsilon}(g_\theta(z))+ P_{r + \epsilon}
    (g_\theta(z))} \nabla_\theta[- P_{r + \epsilon}(g_\theta(z))] -  \\
    &\ \ \frac{1}{P_{g+\epsilon}(g_\theta(z))+ P_{r + \epsilon}(g_\theta(z))}
    \frac{P_{r + \epsilon}(g_\theta(z))}{P_{g + \epsilon}(g_\theta(z))} \nabla_\theta[
    -P_{g + \epsilon}(g_\theta(z))] \bigg]
\end{align*}
Let the density of $\epsilon$ be $\frac{1}{Z} e^{-\frac{\|x\|^2}{2\sigma^2}}$. 
We now define
\begin{align*}
a(z) &= \frac{1}{2\sigma^2}\frac{1}{P_{g+\epsilon}(g_\theta(z))+ P_{r + \epsilon}(g_\theta(z))} \\
b(z) &= \frac{1}{2\sigma^2}\frac{1}{P_{g+\epsilon}(g_\theta(z))+ P_{r + \epsilon}(g_\theta(z))}
    \frac{P_{r + \epsilon}(g_\theta(z))}{P_{g + \epsilon}(g_\theta(z))}
\end{align*}
Trivially, $a$ and $b$ are positive functions. Since
$b = a \frac{P_{r+\epsilon}}{P_{g+\epsilon}}$, we know that $b > a$ if and only if
$P_{r+\epsilon} > P_{g + \epsilon}$, and $b < a$ if and
only if $P_{r+\epsilon} < P_{g + \epsilon}$ as we wanted. Continuing the proof, we know
\begin{align*}
\EE_{z \sim p(z)} &\left[ \nabla_\theta \log(1 - D^*(g_\theta(z))) \right] \\
  &= \EE_{z \sim p(z)} \left[ 2\sigma^2a(z) \nabla_\theta[ - P_{r+\epsilon}(g_\theta(z))]
    - 2\sigma^2b(z) \nabla_\theta[ - P_{g + \epsilon}(g_\theta(z)) \right] \\
  &= \EE_{z \sim p(z)}\left[2\sigma^2a(z) \int_\manM -\nabla_\theta\frac{1}{Z} e^{\frac{-\|g_\theta(z)-y\|_2^2}{2 \sigma^2}} \d \PP_r(y)
    - 2\sigma^2b(z) \int_\manP -\nabla_\theta \frac{1}{Z}e^{\frac{-\|g_\theta(z)-y\|_2^2}{2 \sigma^2}} \d \PP_g(y) \right] \\
  &= \EE_{z \sim p(z)} \bigg[a(z) \int_\manM \frac{1}{Z}e^{\frac{-\|g_\theta(z)-y\|_2^2}{2 \sigma^2}} \nabla_\theta \|g_\theta(z) - y\|^2\d \PP_r(y)
    \\&- b(z) \int_\manP \frac{1}{Z} e^{\frac{-\|g_\theta(z)-y\|_2^2}{2 \sigma^2}} \nabla_\theta \|g_\theta(z) - y \|^2\d \PP_g(y)\bigg] \\
  &= \EE_{z \sim p(z)}\bigg[a(z) \int_\manM P_\epsilon(g_\theta(z)-y) \nabla_\theta 
    \|g_\theta(z)-y\|^2 \d \PP_r(y)
    \\&- b(z) \int_\manP P_\epsilon(g_\theta(z)-y) \nabla_\theta 
    \|g_\theta(z) - y\|^2 \d \PP_g(y) \bigg]
\end{align*}
Finishing the proof.
\end{proof}
An interesting observation is that if we have two distributions $\PP_r$ and $\PP_g$
with support on manifolds that
are close, the noise terms will make the noisy distributions $\PP_{r + \epsilon}$ and
$\PP_{g + \epsilon}$ almost overlap, and the JSD
between them will be small. This is in drastic contrast to the noiseless variants $\PP_r$
and $\PP_g$, where
all the divergences are maxed out, regardless of the closeness of the manifolds. We could
argue to use the JSD of the noisy variants to measure a similarity between the original
distributions, but this would depend on the amount of noise, and is not an intrinsic
measure of $\PP_r$ and $\PP_g$. Luckilly, there are alternatives.
\begin{definition}
  We recall the definition of the Wasserstein metric $W(P, Q)$ for $P$ and $Q$
two distributions over $\mathcal{X}$. Namely,
$$W(P, Q) = \inf_{\gamma \in \Gamma} \int_{\mathcal{X} \times \mathcal{X}} \|x - y \|_2 d \gamma(x, y) $$
where $\Gamma$ is the set of all possible joints on $\mathcal{X} \times \mathcal{X}$
that have marginals $P$ and $Q$. 
\end{definition}
The Wasserstein distance also goes by other names, most commonly the transportation metric
and the earth mover's distance. This last name is most explicative: it's the minimum cost
of transporting the whole probability mass of $P$ from its support to match the probability
mass of $Q$ on $Q$'s support. This identification of transporting points from $P$ to $Q$ is
done via the coupling $\gamma$. We refer the reader to \cite{Vil} for an in-depth explanation
of these ideas. It is easy to see now that the Wasserstein metric incorporates
the notion of distance (as also seen inside the integral) between the elements in the support
of $P$ and the ones in the support of $Q$, and that as the supports of $P$ and $Q$ get closer
and closer, the metric will go to 0, inducing as well a notion of distance between manifolds.

Intuitively, as we decrease the noise, $\PP_X$ and $\PP_{X + \epsilon}$ become
more similar. However, it is easy to see again that $JSD(\PP_X \| \PP_{X+\epsilon})$ 
is maxed out, regardless of the amount of noise. The following Lemma shows that this is not
the case for the Wasserstein metric, and that it goes to 0 smoothly when we
decrease the variance of the noise.
\begin{lemma} \label{lem::Wnoise}
  If $\epsilon$ is a random vector with mean 0, then we have 
$$W (\PP_{X}, \PP_{X + \epsilon}) \leq V^{\frac{1}{2}}$$ 
where $V = \EE[\|\epsilon\|_2^2]$ is the variance of
$\epsilon$.
\end{lemma}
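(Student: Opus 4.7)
The plan is to bound the infimum in the definition of $W(\PP_X, \PP_{X+\epsilon})$ by exhibiting a single explicit coupling and estimating the resulting integral. The natural candidate is the coupling induced by the joint law of the pair $(X, X+\epsilon)$: that is, define $\gamma$ to be the distribution of $(X, X+\epsilon)$ on $\mathcal{X} \times \mathcal{X}$. By construction, its first marginal is $\PP_X$ and its second marginal is $\PP_{X+\epsilon}$, so $\gamma \in \Gamma$ and hence $W(\PP_X, \PP_{X+\epsilon})$ is at most $\int \|x-y\|_2 \, d\gamma(x,y)$.

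Next I would rewrite this integral in terms of the underlying random variables: under $\gamma$, the difference $y - x$ is exactly $\epsilon$, so
$$\int_{\mathcal{X} \times \mathcal{X}} \|x - y\|_2 \, d\gamma(x, y) = \EE\bigl[\|\epsilon\|_2\bigr].$$
It then remains to control $\EE[\|\epsilon\|_2]$ by $V^{1/2}$. This is a direct application of Jensen's inequality to the concave function $t \mapsto \sqrt{t}$ on $[0,\infty)$ applied to the nonnegative random variable $\|\epsilon\|_2^2$, giving $\EE[\|\epsilon\|_2] = \EE\bigl[\sqrt{\|\epsilon\|_2^2}\bigr] \leq \sqrt{\EE[\|\epsilon\|_2^2]} = V^{1/2}$. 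Chaining the two inequalities yields the claim.

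There is no real obstacle here; the only subtlety is a modeling one, namely making sure we are entitled to treat $(X, X+\epsilon)$ as a joint random variable with the prescribed marginals. This is immediate if we assume (as is standard and implicit in the statement) that $X$ and $\epsilon$ are defined on a common probability space; the hypothesis that $\epsilon$ has mean zero is not actually used in the bound itself but ensures that $V = \EE[\|\epsilon\|_2^2]$ genuinely coincides with the variance of $\epsilon$, which is the interpretation adopted in the statement.
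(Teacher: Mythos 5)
Your argument is correct and is essentially the same as the paper's: both exhibit the coupling given by the joint law of $(X, X+\epsilon)$, rewrite the transport cost as $\EE[\|\epsilon\|_2]$, and finish with Jensen's inequality to get $\EE[\|\epsilon\|_2] \leq \EE[\|\epsilon\|_2^2]^{1/2} = V^{1/2}$. Your closing remark (that mean zero is only needed so that $V$ is literally the variance, and that one just needs $X$ and $\epsilon$ on a common probability space) is a fair observation and matches the paper, which simply takes $\epsilon$ independent of $x$ when building the coupling.
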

\begin{proof}
  Let $x \sim \PP_X$, and $y = x + \epsilon$ with $\epsilon$ independent from
$x$. We call $\gamma$ the joint of $(x, y)$, which clearly has marginals 
$\PP_X$ and $\PP_{X + \epsilon}$. Therefore,
\begin{align*}
  W(\PP_X, \PP_{X + \epsilon}) &\leq \int \|x - y \|_2 d \gamma(x, y) \\
  &= \EE_{x \sim \PP_X} \EE_{y \sim x + \epsilon} [\|x - y\|_2] \\
  &= \EE_{x \sim \PP_X} \EE_{y \sim x + \epsilon} [ \| \epsilon \|_2 ] \\
  &= \EE_{x \sim \PP_X} \EE_{\epsilon} [\| \epsilon \|_2] \\
  &= \EE_{\epsilon} [ \| \epsilon \|_2 ] \\
  &\leq \EE_{\epsilon} [\| \epsilon\|_2^2 ]^{\frac{1}{2}} = V^{\frac{1}{2}}
\end{align*} 
where the last inequality was due to Jensen.
\end{proof}

We now turn to one of our main results. We are interested in studying the distance
between $\PP_r$ and $\PP_g$ without any noise, even when their supports lie on different
manifolds, since (for example) the closer these manifolds are, the closer 
to actual points on the data manifold the samples will be.
Furthermore, we eventually want a way to evaluate generative models,
regardless of whether they are continuous (as in a VAE)
or not (as in a GAN), a problem that has for now been completely unsolved. The next
theorem relates the Wasserstein distance of $\PP_r$ and $\PP_g$, without any noise
or modification, to the divergence of $\PP_{r + \epsilon}$ and $\PP_{g + \epsilon}$,
and the variance of the noise. Since $\PP_{r + \epsilon}$ and $\PP_{g + \epsilon}$ are
continuous distributions, this divergence is a sensible estimate, which can even be
attempted to minimize, since a discriminator trained on those distributions will approximate
the JSD between them, and provide smooth gradients as per Corolary \autoref{coro::gradnoise}.
\begin{theo} \label{theo::wasser}
  Let $\PP_r$ and $\PP_g$ be any two distributions, and $\epsilon$ be a random vector
with mean 0 and variance $V$. If $\PP_{r+\epsilon}$ and $\PP_{g + \epsilon}$ have
support contained on a ball of diameter $C$, then \footnote{While this last condition
isn't true if $\epsilon$ is a Gaussian, this is easily fixed by clipping the noise}
\begin{equation} \label{eq::wbound}
W(\PP_r, \PP_g) \leq 2 V^\frac{1}{2} + 2 C \sqrt{JSD(\PP_{r + \epsilon} \| \PP_{g + \epsilon})}
\end{equation}
\end{theo}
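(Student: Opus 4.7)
The plan is to decompose the Wasserstein distance between $\PP_r$ and $\PP_g$ via the triangle inequality through the noisy versions, namely
\begin{equation*}
W(\PP_r, \PP_g) \leq W(\PP_r, \PP_{r+\epsilon}) + W(\PP_{r+\epsilon}, \PP_{g+\epsilon}) + W(\PP_{g+\epsilon}, \PP_g),
\end{equation*}
so the problem splits into a ``noise'' part (the outer two terms) and a ``divergence'' part (the middle term). The outer terms are handled immediately by Lemma \ref{lem::Wnoise}, which gives $W(\PP_r, \PP_{r+\epsilon}) \leq V^{1/2}$ and symmetrically $W(\PP_g, \PP_{g+\epsilon}) \leq V^{1/2}$, contributing the $2V^{1/2}$ term in the bound.

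The remaining task is to show $W(\PP_{r+\epsilon}, \PP_{g+\epsilon}) \leq 2C\sqrt{JSD(\PP_{r+\epsilon} \| \PP_{g+\epsilon})}$. I would do this in two steps. First, since both distributions are supported inside a common ball of diameter $C$, any transport plan can move mass at most distance $C$, so a standard argument (couple the shared mass by the identity, and move the remaining mass of size $TV$ arbitrarily) yields $W(\mu,\nu) \leq C \cdot TV(\mu,\nu)$ for any two such distributions. Second, I would relate total variation to the JSD by applying Pinsker's inequality to each summand of the JSD with respect to the midpoint $M = (\PP_{r+\epsilon} + \PP_{g+\epsilon})/2$: since $TV(P, M) = TV(Q, M) = \tfrac{1}{2}TV(P,Q)$, Pinsker's inequality $KL(\cdot \| M) \geq 2 TV(\cdot, M)^2$ applied to both halves of the JSD gives $JSD(P \| Q) \geq \tfrac{1}{4} TV(P, Q)^2$, hence $TV \leq 2\sqrt{JSD}$. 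Composing these two bounds yields the desired $W \leq 2C\sqrt{JSD}$.

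Putting the three contributions together produces equation \eqref{eq::wbound}. The only place requiring care is the Pinsker-to-JSD step: one must use the fact that the JSD is the average of two KL divergences against the midpoint and exploit that both TV distances to the midpoint are exactly half of $TV(P,Q)$. This is the only mildly nontrivial piece; the rest is the triangle inequality and invocations of the previous Lemma. One minor footnote-level issue is that Gaussian noise violates the bounded-support assumption, but the theorem's footnote already addresses this by clipping, so no additional work is needed beyond the three bullet steps above.
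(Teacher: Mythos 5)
Your proposal is correct and follows essentially the same route as the paper: the triangle inequality for $W$ through the noisy distributions, Lemma \autoref{lem::Wnoise} for the two outer terms, the bound $W \leq C\,\delta$ for measures supported in a ball of diameter $C$, and Pinsker's inequality relative to the midpoint mixture to convert total variation into $2\sqrt{JSD}$. The only differences are cosmetic — you prove $W \leq C\,TV$ by an explicit maximal-coupling argument instead of citing Kantorovich--Rubinstein, and you use the identity $TV(\cdot,\PP_m)=\tfrac12 TV(\PP_{r+\epsilon},\PP_{g+\epsilon})$ rather than the triangle inequality on $\delta$ (your stated constant $JSD \geq \tfrac14 TV^2$ is even slightly weaker than what your own computation gives, namely $\tfrac12 TV^2$, but it still suffices for the claimed bound).
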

\begin{proof}
\begin{align*}
  W(\PP_r, \PP_g) &\leq W(\PP_r, \PP_{r + \epsilon}) + W(\PP_{r + \epsilon}, \PP_{g + \epsilon})
    + W(\PP_{g + \epsilon}, \PP_g) \\
  &\leq 2 V^\frac{1}{2} + W(\PP_{r + \epsilon}, \PP_{g + \epsilon}) \\
  &\leq 2 V^\frac{1}{2} + C \delta(\PP_{r + \epsilon}, \PP_{g + \epsilon}) \\
  &\leq 2 V^\frac{1}{2} + C \left( \delta(\PP_{r + \epsilon}, \PP_m) + \delta(\PP_{g + \epsilon}, \PP_m)\right) \\
  &\leq 2 V^\frac{1}{2} + C \left( \sqrt{\frac{1}{2}KL(\PP_{r + \epsilon} \| \PP_m)} + \sqrt{\frac{1}{2}KL(\PP_{g + \epsilon} \| \PP_m)} \right) \\
  &\leq 2 V^\frac{1}{2} + 2 C \sqrt{JSD(\PP_{r + \epsilon} \| \PP_{g + \epsilon})}
\end{align*}
We first used the Lemma \autoref{lem::Wnoise} to bound everything but the
middle term as a function of $V$. After that, we followed by the
fact that $W(P, Q) \leq C \delta(P, Q)$ wih $\delta$ the
total variation, which is a popular Lemma arizing from
the Kantorovich-Rubinstein duality. After that, we used the
triangular inequality on $\delta$ and $\PP_m$ the mixture distribution
between $\PP_{g + \epsilon}$ and $\PP_{r + \epsilon}$. Finally, we used
Pinsker's inequality and later the fact that each individual $KL$ is
only one of the non-negative summands of the $JSD$.
\end{proof}

Theorem \autoref{theo::wasser} points us to an interesting idea. The
two terms in equation \eqref{eq::wbound} can be controlled. The first
term can be decreased by annealing the noise, and the second
term can be minimized by a GAN when the discriminator is trained
on the noisy inputs, since it will be approximating the JSD
between the two continuous distributions. One great advantage
of this is that we no longer have to worry about training schedules.
Because of the noise, we can train the discriminator
till optimality without any problems and get smooth interpretable
gradients by Corollary \autoref{coro::gradnoise}. All this while
still minimizing the distance between $\PP_r$ and $\PP_g$, the two
noiseless distributions we in the end care about.

\subsubsection*{Acknowledgments}

The first author would like to especially thank Luis Scoccola
for help with the proof of Lemma \autoref{lem::glowdim}.

The authors would also like to thank
Ishmael Belghazi,
Yoshua Bengio,
Gerry Che,
Soumith Chintala,
Caglar Gulcehre,
Daniel Jiwoong Im,
Alex Lamb,
Luis Scoccola,
Pablo Sprechmann,
Arthur Szlam,
Jake Zhao
for insightful comments and advice.
%
\bibliography{princ1}
\bibliographystyle{iclr2017_conference}

\clearpage
\begin{appendices}
\section{Proofs of things} \label{app::proofs}
\begin{proof}[Proof of Lemma \autoref{lem::glowdim}]
  We first consider the case where the nonlinearities are rectifiers or leaky
  rectifiers of the form $\sigma(x) = \Ind[x < 0] c_1 x + \Ind[x \geq 0] c_2 x$
  for some $c_1, c_2 \in \RR$.
  In this case, $g(z) = \matr{D}_n \matr{W}_n \dots \matr{D}_1 \matr{W}_1 z$,
  where $\matr{W}_i$ are affine transformations and $\matr{D}_i$ are some
  diagonal matrices dependent on $z$ that have diagonal entries $c_1$ or $c_2$. If
  we consider $\mathcal{D}$ to be the (finite) set of all diagonal matrices with
  diagonal entries $c_1$ or $c_2$, then $g(\mathcal{Z}) \subseteq \bigcup_{D_i \in
  \mathcal{D}} \matr{D}_n \matr{W}_n \dots \matr{D}_1 \matr{W}_1 \mathcal{Z}$,
  which is a finite union of linear manifolds.

  The proof for the second case is technical and slightly more involved. 
  When $\sigma$ is a pointwise smooth strictly increasing nonlinearity, then
  applying it vectorwise it's a diffeomorphism to its image. Therefore,
  it sends a countable union of manifolds of dimension $d$ to a countable
  union of manifolds of dimension $d$. If we can prove the same thing for
  affine transformations we will be finished, since $g(\mathcal{Z})$ 
  is just a composition
  of these applied to a $\dim \mathcal{Z}$ dimensional manifold.
  Of course, it suffices to prove that an affine transformation sends
  a manifold to a countable union of manifolds without increasing dimension,
  since a countable union of countable unions is still a countable union.
  Furthermore, we only need to show this for linear transformations,
  since applying a bias term is a diffeomorphism.

  Let $\matr{W} \in \RR^{n \times m}$ be a matrix.
  Note that by the singular value decomposition, $\matr{W} = \matr{U}
  \matr{\Sigma} \matr{V}$, where $\matr{\Sigma}$ is a square diagonal matrix
  with diagonal positive entries and $\matr{U}, \matr{V}$ are compositions
  of changes of basis, inclusions (meaning adding 0s to new coordinates)
  and projections to a subset of the coordinates. Multiplying by $\Sigma$ and applying
  a change of basis are diffeomorphisms, and adding 0s to new coordinates is
  a manifold embedding, so we only need to prove our statement
  for projections onto a subset of the coordinates. Let $\pi: \RR^{n+k} \rightarrow
  \RR^n$, where $\pi(x_1, \dots, x_{n+k}) = (x_1, \dots, x_n)$ be our projection and
  $\manM \subseteq \RR^{n + k}$ our $d$-dimensional manifold. If $n \leq d$, we are done
  since the image of $\pi$ is contained in all $\RR^n$, a manifold with at most
  dimension $d$. We now turn to the case where $n > d$. Let $\pi_i(x) = x_i$
  be the projection onto the $i$-th coordinate. If $x$ is a critical point of $\pi$,
  since the coordinates of $\pi$ are independent, then $x$ has to be a critical point
  of a $\pi_i$. By a consequence of the Morse Lemma, the critical points of $\pi_i$
  are isolated, and therefore so are the ones of $\pi$, meaning that there is at most a
  countable number of them. Since $\pi$ maps the non-critical points onto a 
  $d$ dimensional manifold (because it acts as an embedding) and the countable
  number of critical points into a countable number of points (or
  0 dimensional manifolds), the proof is finished. 
\end{proof}
\begin{proof}[Proof of Lemma \autoref{lem::perfalign}]
  For now we assume that $\manM$ and $\manP$ are without boundary.
If $\dim \manM + \dim \manP \geq d$ it is known that under arbitrarilly small perturbations
defined as the ones in the statement of this Lemma,
the two dimensions will intersect only transversally with probability 1 by the General Position Lemma.
If $\dim \manM + \dim \manP < d$, we
will show that with probability 1, $\manM + \eta$ and $\manP + \eta'$ will not intersect,
thereby getting our desired result.
  Let us then assume $\dim \manM + \dim \manP < d$. Note that $\hat \manM \cap \hat \manP \neq
\emptyset$ if and only if there are $x \in \manM, y \in \manP$ such that $x + \eta = 
y + \eta'$, or equivalently $x - y = \eta' - \eta$. Therefore, $\hat \manM$ and $\hat 
\manP$ intersect if and only if $\eta' - \eta \in \manM - \manP$. Since $\eta, \eta'$ are
independent continuous random variables, the difference is also continuous. If $\manM
- \manP$ has measure 0 in $\RR^d$ then $\PP(\eta' - \eta \in \manM - \manP) = 0$,
concluding the proof. We will therefore show that $\manM - \manP$ has measure 0.
Let $f: \manM \times \manP \rightarrow \RR^d$ be $f(x, y) = x - y$. If $m$ and $p$
are the dimensions of $\manM$ and $\manP$, then $f$ is a smooth function between
an $m+p$-dimensional manifold and a $d$ dimensional one. Clearly, the image
of $f$ is $\manM - \manP$. Therefore,
\begin{align*}
\manM - \manP = f(\{&z \in \manM \times \manP | \text{rank}(d_z f) < m + p\}) \\
  &\cup f(\{z \in \manM \times \manP | \text{rank}(d_z f) = m + p\})
\end{align*}
The first set is the image of the critical points, namely the critical values. By
Sard's Lemma, this set has measure 0.
Let's call $A = \{z \in \manM \times \manP | \text{rank}(d_z f) = m + p \}$.
Let $z$ be an element of $A$. By the inverse function theorem, there is a neighbourhood
$U_z \subseteq \manM \times \manP$ of $z$ such that $f|_{U_z}$ is an embedding.
Since every manifold has a countable topological basis, we can cover $A$ by
countable sets $U_{z_n}$, where $n \in \NN$. We will just note them by $U_n$.
Since $f|_{U_n}$ is an embedding, $f(U_n)$ is an $m+p$-dimensional manifold,
and since $m + p < d$, this set has measure 0 in $\RR^d$. Now, $f(A) = 
\bigcup_{n \in \NN} f(U_n)$, which therefore has measure 0 in $\RR^d$, finishing the
proof of the boundary free case.

Now we consider the case where $\manM$ and $\manP$ are manifolds with boundary. By a
simple union bound,
\begin{align*}
\PP_{\eta, \eta'}(\tilde \manM
\text{ perfectly aligns with } \tilde \manP) &\leq 
  \PP_{\eta, \eta'}(\Int \tilde \manM
\text{ perfectly aligns with } \Int \tilde \manP) \\
&+ \PP_{\eta, \eta'}(\Int \tilde \manM
\text{ perfectly aligns with } \partial \tilde \manP) \\
&+ \PP_{\eta, \eta'}(\partial \tilde \manM 
\text{ perfectly aligns with } \Int \tilde \manP) \\
&+ \PP_{\eta, \eta'}(\partial \tilde \manM
\text{ perfectly aligns with } \partial \tilde \manP) \\
&= 0
\end{align*}
where the last equality arizes when combining the facts that
$\tilde{\Int \manM} = \eta + \Int \manM = \Int(\eta + \manM)= \Int \tilde{\manM}$ (and analogously
for the boundary and $\manP$), that the boundary and interiors
of $\manM$ and $\manP$ are boundary free regular submanifolds of
$\RR^d$ without full dimension, and then
applying the boundary free case of the proof.
\end{proof}
\begin{proof}[Proof of Lemma \autoref{lem::interm}]
  Let $m = \dim \manM$ and $p = \dim \manP$. We again consider first
the case where $\manM$ and $\manP$ are manifolds without boundary.
If $m + p < d$, then $\mathcal{L} = \emptyset$
so the statement is obviously true. If $m + p \geq d$, then $\manM$ and $\manP$ intersect
transversally. This implies that $\mathcal{L}$ is a manifold of dimension $m + p - d < m, p$.
Since $\mathcal{L}$ is a submanifold of both $\manM$ and $\manP$ that has lower dimension,
it has measure 0 on both of them.

We now tackle the case where $\manM$ and $\manP$ have boundaries. Let us remember
that $\manM = \Int \manM \cup \partial \manM$ and the union is disjoint (and analogously
for $\manP$). By using elementary properties of sets, we can trivially see that
$$ \mathcal{L} = \manM \cap \manP = (\Int \manM \cap \Int \manP) \cup
(\Int \manM \cap \partial \manP) \cup
(\partial \manM \cap \Int \manP) \cup
(\partial \manM \cap \partial \manP) $$
where the unions are disjoint. This is the disjoint union of 4 strictly lower dimensional
manifolds, by using the first part of the proof. Since each one of these intersections
has measure 0 on either the interior or boundary of $\manM$ (again, by the first part of the proof), 
and interior and boundary are contained in $\manM$, each one of the four intersections has
measure 0 in $\manM$. Analogously, they have measure 0 in $\manP$, and by a simple union bound
we see that $\mathcal{L}$ has measure 0 in $\manM$ and $\manP$ finishing the remaining case
of the proof.
\end{proof}
\begin{proof}[Proof of Theorem \autoref{theo::noisydist}]
 We first need to show that $\PP_{X+\epsilon}$ is absolutely continuous. Let $A$ be a Borel set
with Lebesgue measure 0. Then, by the fact that $\epsilon$ and $X$ are independent, we know by
Fubini
\begin{align*}
  \PP_{X+\epsilon}(A) &= \int_{\RR^d} \PP_\epsilon(A - x) \d \PP_X(x) \\
&= \int_{\RR^d} 0 \d \PP_X(x) = 0
\end{align*}
Where we used the fact that if $A$ has Lebesgue measure zero, then so does $A-x$ and since
$\PP_\epsilon$ is absolutely continuous, $\PP_\epsilon(A-x) = 0$.

Now we calculate the density of $\PP_{X+\epsilon}$. Again, by using the independence of $X$ and
$\epsilon$, for any Borel set $B$ we know
\begin{align*}
  \PP_{X + \epsilon}(B) &= \int_{\RR^d} \PP_\epsilon(B - y) \d \PP_X(y) \\
                        &= \EE_{y \sim \PP_X} \left[\PP_\epsilon(B-y)\right] \\
                        &= \EE_{y \sim \PP_x} \left[\int_{B-y} P_\epsilon(x) dx \right] \\
&= \EE_{y \sim \PP_x} \left[ \int_B P_\epsilon(x-y) dx \right] \\
&= \int_B \EE_{y \sim \PP_x} \left[ P_\epsilon(x-y) \right] dx
\end{align*}
Therefore, $\PP_{X + \epsilon}(B) = \int_B P_{X+\epsilon}(x) dx$ for our proposed $P_{X + \epsilon}$
and all Borel sets $B$. By the uniqueness of the Radon-Nikodym theorem,
this implies the proposed $P_{X + \epsilon}$
is the density of $\PP_{X+\epsilon}$.
The equivalence of the formula changing the expectation
for $\int_\manM \PP_X$ is trivial by the definition of expectation and the fact that
the support of $\PP_X$ lies on $\manM$.
\end{proof}

\section{Further Clarifications} \label{app::clarifications}
In this appendix we further explain some of the
terms and ideas mentioned in the paper, which due to space
constrains, and to keep the flow of the paper, couldn't
be extremely developed in the main text. Some of these
have to do with notation, others with technical elements
of the proofs. On the latter case, we try to convey more intuition
than we previously could. We present these clarifications
in a very informal fashion in the following item list.
\begin{itemize}
  \item There are two different but very related properties a random
 variable can have. A random variable $X$ is said to be continuous if $P(X = x) = 0$
for all single points $x \in \mathcal{X}$. Note that a random variable concentrated
on a low dimensional manifold such as a plane can have this property.
However, an absolutely continuous random variable has the following
property: if a set $A$ has Lebesgue measure 0, then $P(X \in A) = 0$.
Since points have measure 0 with the Lebesgue measure, absolute continuity
implies continuity. A random variable that's supported on a low dimensional
manifold therefore will not be absolutely continuous: let $\manM$ a low
dimensional manifold be the support of $X$. Since a low dimensional
manifold has 0 Lebesgue measure, this would imply $P(X \in M) = 0$, which
is an absurd since $\manM$ was the support of $X$. The property of $X$ being
absolutely continuous can be shown to be equivalent to $X$ having a density:
the existence of a function $f: \mathcal{X} \rightarrow \RR$ such that
$P(X \in A) = \int_A f(x) \d x$
(this is a consequence of the Radon-Nikodym theorem). 

The annoying part is that in everyday paper writing when we talk about
continuous random variables, we omit the `absolutely' word to keep the
text concise and actually talk about absolutely continuous random variables
(ones that have a density), this is done through almost all sciences and
throughout mathematics as well, annoying as it is. However we made the
clarification in here since it's relevant to our paper not to mistake the two terms.
  \item The notation $\PP_r[D(x) = 1] = 1$ is the abbreviation of
$\PP_r[\{x \in \mathcal{X}: D(x) = 1\}] = 1$ for a measure $\PP_r$.
Another way of expressing this more formally is $\PP_r[D^{-1}(1)] = 1$.
  \item In the proof of Theorem \autoref{theo::perfDdisj}, the distance
between sets $d(A, B)$ is defined as the usual distance between sets
in a metric space
$$ d(A, B) = \inf_{x \in A, y \in B} d(x,y) $$
where $d(x, y)$ is the distance between points (in our case the
Euclidean distance).
  \item Note that not everything that's outside of the support 
of $\PP_r$ has to be a generated image. Generated images are only
things that lie in the support of $\PP_g$, and there are things that
don't need to be in the support of either $\PP_r$ or $\PP_g$ (these could
be places where $0 < D < 1$ for example). This is because the discriminator
is not trained to discriminate $\PP_r$ from all things that are not
$\PP_r$, but to distinguish $\PP_r$ from $\PP_g$. Points that don't
lie in the support of $\PP_r$ or $\PP_g$ are not important to the 
performance of the discriminator (as is easily evidenced in its cost).
Why we define accuracy 1 as is done in the text is to avoid the
identification of a single `tight' support, since this typically
leads to problems (if I take a measure 0 set from any support it
still is the support of the distribution). In the end, what we aim for is:
\begin{itemize}
  \item  We want $D(x) = 1$ with probability 1  when $x \sim \PP_r$.
  \item  We want $D(x) = 0$ with probability 1 when $x \sim \PP_g$.
  \item  Whatever happens elsewhere is irrelevant (as it is also reflected by the cost of the discriminator)
\end{itemize}
\item We say that a discriminator $D^*$ is optimal for $g_\theta$
(or its corresponding $\PP_g$)
if for all measurable functions $D: \mathcal{X} \rightarrow [0,1]$
we have
$$ L(D^*, g_\theta) \geq L(D, g_\theta) $$
for $L$ defined as in equation \eqref{eq::cost}.
\end{itemize}

\end{appendices}

\end{document}